%% file: main.tex
\documentclass{article} 
\usepackage{iclr2026_delta,times}

\input{math_commands.tex}

\usepackage{hyperref}
\usepackage{url}
\iclrfinalcopy
\usepackage{amssymb}
\usepackage{pifont}

\usepackage{wrapfig}
\usepackage{mathtools}
\usepackage{amsthm}
\usepackage{algorithm}
\usepackage{amsmath,amssymb}
\usepackage{algorithm}
\usepackage{algpseudocode}
\algrenewcommand\algorithmiccomment[1]{\hfill$\triangleright$~#1}
\usepackage{hyperref}
\usepackage{graphicx}
\usepackage{subcaption}
\usepackage{url}
\theoremstyle{plain}
\newtheorem{theorem}{Theorem}[section]
\newtheorem{proposition}[theorem]{Proposition}
\newtheorem{lemma}[theorem]{Lemma}

\theoremstyle{definition}
\newtheorem{definition}[theorem]{Definition}
\newtheorem{assumption}[theorem]{Assumption}
\theoremstyle{remark}

\usepackage[textsize=tiny]{todonotes}
\hypersetup{
  colorlinks=true,
  linkcolor=blue,
}
\usepackage{comment}

\title{BSTabDiff: Block-Subunit Diffusion Priors for High-Dimensional Tabular Data Generation}

\author{
Al Zadid Sultan Bin Habib$^{1}$, Md Younus Ahamed$^{2}$, Prashnna Gyawali$^{3}$,\\
\textbf{Gianfranco Doretto}$^{4}$, \textbf{Donald A. Adjeroh}$^{5}$\\[4pt]
$^{1,2,3,5}$Lane Department of Computer Science and Electrical Engineering\\
West Virginia University, Morgantown, WV 26506, USA\\
\texttt{\{$^{1}$ah00069,$^{2}$ma00087\}@mix.wvu.edu}\\
\texttt{\{$^{3}$prashna.gyawali,$^{5}$donald.adjeroh\}@mail.wvu.edu}\\[2pt]
$^{4}$Scientific Computing and Imaging Institute \& Department of Biomedical Informatics\\
The University of Utah, Salt Lake City, UT 84112, USA\\
\texttt{$^{4}$doretto@utah.edu}
}

\begin{document}

\maketitle
\begingroup
\renewcommand{\thefootnote}{}
\footnotetext{See code: \url{https://github.com/zadid6pretam/BSTabDiff}, \texttt{pip install bstabdiff}}
\endgroup
\begin{abstract}
High-Dimensional Low-Sample Size (HDLSS) tabular domains (e.g., omics) are characterized by $n \ll m$,  where $n$ = number of samples, and $m$ = number of features. Such domains  often exhibit strong local correlation groups, sparse cross-group dependencies, heavy-tailed non-Gaussian marginals, heteroscedastic noise, and structured missingness, making direct density learning in $\mathbb{R}^m$ ill-conditioned since $n \ll m$. We propose BSTabDiff, a block-subunit generative framework that partitions the $m$ observed features into $M$ latent blocks ($M \ll m$) and generates each block via a shared low-dimensional subunit variable, concentrating global dependence learning in the compact block-latent space $\mathbb{R}^M$ while decoding to the full feature space with copula-driven dependence, flexible per-feature marginals, and explicit missingness mechanisms. BSTabDiff supports modern deep priors on block latents, including diffusion and normalizing flows, enabling stable synthesis and controllable benchmark generation in the HDLSS regime. Empirically, BSTabDiff produces more realistic and stable high-dimensional synthetic data when compared with  unstructured tabular generators on HDLSS data.
\end{abstract}

\section{Introduction}
Synthetic data has become a practical lever for scaling learning systems when real-world data are scarce, siloed, expensive to curate, or too sensitive to share. In industry settings, synthetic generation is increasingly positioned as a way to bootstrap domain-specific datasets for training and evaluating modern AI pipelines (including agentic systems), helping mitigate data bottlenecks while enabling controlled coverage of rare or safety-critical cases \citep{nvidia_synth_agentic}. At the same time, synthetic data is now also part of the training recipe for tabular foundation models: Prior-Data Fitted Networks such as TabPFN variants are trained offline on large collections of synthetic datasets sampled from a prior to approximate Bayesian inference at test time \citep{tabpfn, tabpfnv2, tabpfn2.5}. These trends motivate tabular generators that are not only of high-fidelity, but also scalable and controllable so they can serve as reliable engines for pretraining, simulation, augmentation, and benchmarking across domains.\newline
However, many high-value scientific tabular domains live in the High-Dimensional Low-Sample Size (HDLSS) regime, where $n$ samples are far fewer than $m$ features ($n\ll m$) \citep{w1,w2, li2011random}. HDLSS data such as omics-like datasets further exhibit strong local correlation groups (modules) \citep{w3}, sparse cross-group dependence, heavy-tailed and non-Gaussian marginals, heteroscedasticity and overdispersion \citep{w4,w5,w6}, and structured missingness mechanisms \citep{w7,w25}. In this regime, directly learning dense dependence in $\mathbb{R}^m$ is often ill-conditioned. Meanwhile, sequence-style tabular generators that treat columns as tokens (e.g., LLM-based synthesis) can become computationally strained as $m$ grows, since standard self-attention scales quadratically in sequence length \citep{attention,great}. These gaps leave a practical need for tabular generators that explicitly exploit HDLSS structure to achieve stable learning and efficient sampling at omics-scale dimensionalities.\newline
\noindent\textbf{Contributions.} We introduce \textbf{BSTabDiff} (\underline{B}lock-\underline{S}ubunit \underline{Tab}ular \underline{Diff}usion), a block-subunit generative framework tailored to HDLSS tabular data. Key novel elements include:

\textbf{1) Block-subunit HDLSS generator:} We propose a generative family that partitions the $m$ observed features into $M$ latent blocks ($M\ll m$), generating each block via a shared low-dimensional subunit variable while preserving feature-wise marginals and structured missingness. \textbf{2) Compact deep priors on block latents:} We concentrate global dependence learning in $\mathbb{R}^M$ by placing modern priors on block latents, including diffusion and normalizing flows, improving stability when $n\ll m$.
\textbf{3) HDLSS-oriented modeling knobs and guarantees:} We provide a block-factorized learning signal and permutation-invariant identifiability (up to block relabeling), accommodating arbitrary observed feature order.
\textbf{4) Empirical stability in high dimension:} We show improved realism and stability over unstructured tabular generators in HDLSS settings, enabling controllable benchmark generation and synthetic pretraining at high feature counts.

\section{Related Work}
\label{related}
A broad set of baselines for tabular data synthesis exists, including GAN/VAE-style generators, diffusion/score-based models, and LLM/foundation-model approaches, and we provide a detailed review in Appendix~\ref{morerelated}.

\section{Methodology}
\label{method}
\textbf{Block-Subunit Generator:} BSTabDiff introduces a tabular generative model that partitions features into blocks and assigns each block a shared latent “subunit” variable, enabling high-dimensional feature generation through low-dimensional block factors while preserving per-feature marginals and missingness patterns. This complements prior tabular synthesis frameworks (e.g., GAN and system-based generators) by explicitly targeting HDLSS structure~\citep{w16,w17}, and is compatible with modern latent priors such as diffusion or flows~\citep{w8,w9,w10,w11,w14,w15}. Fig.~\ref{fig:bstabdiff_architecture} illustrates the full generative architecture of BSTabDiff, including latent sampling, emission decoding, and permutation to observed space. \newline
\begin{figure}[t]
    \centering
    \includegraphics[width=\textwidth]{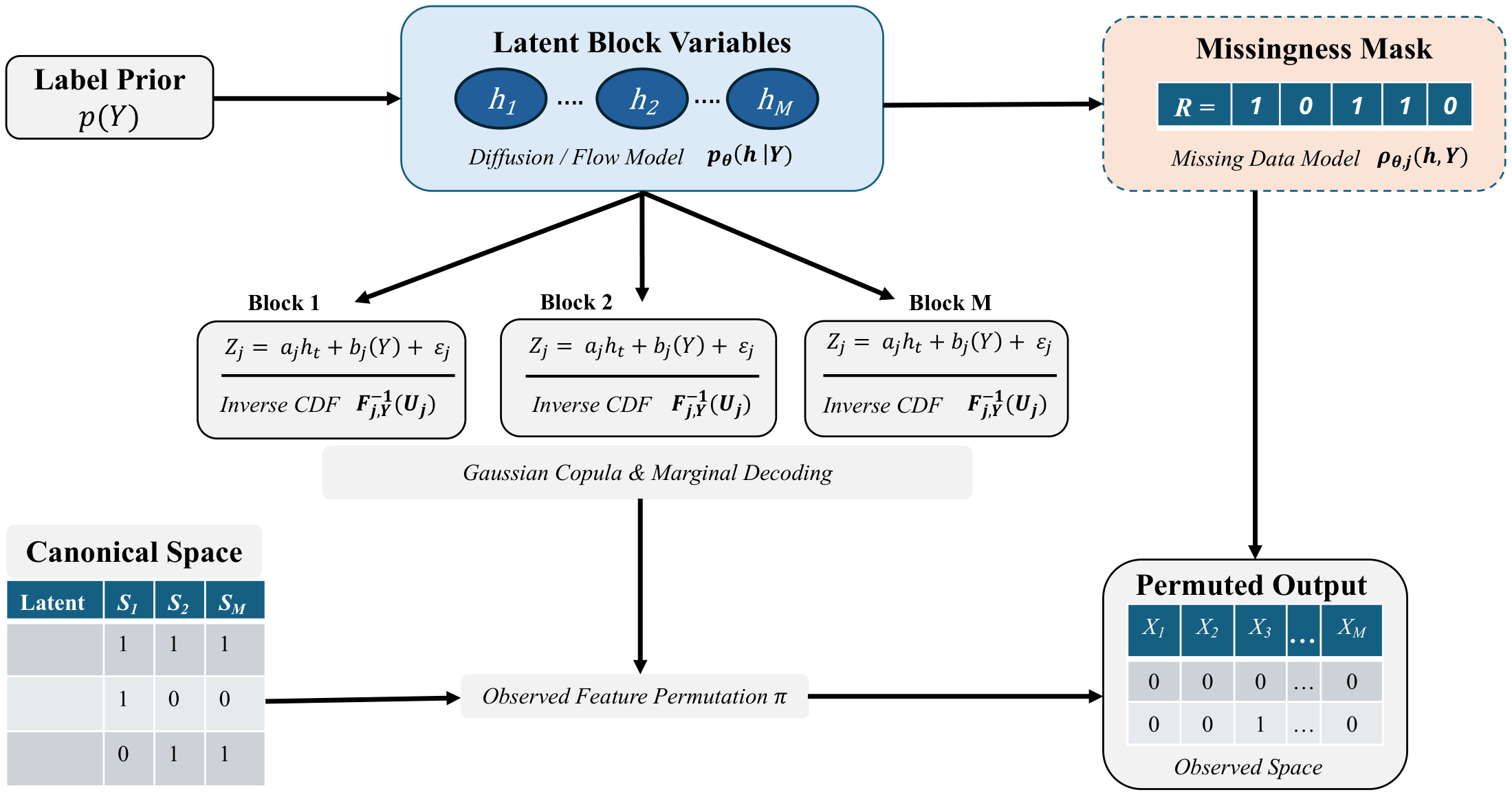}
    \captionsetup{skip=2pt}
    \caption{
    \textbf{Architecture of BSTabDiff.} The model samples a label, then draws low-dimensional block-latents \( h_1,\dots,h_M \) using a learned diffusion/flow prior. Each block governs a subset of features via copula-Gaussian decoding and inverse marginal CDFs to yield realistic marginals. A missingness mask is generated in parallel. The output is then permuted to arbitrary feature order, yielding high-dimensional tabular data with structured dependence, marginals, and missingness.
    }
    \label{fig:bstabdiff_architecture}
\end{figure}
\textbf{A Block-Subunit Generative Model for HDLSS Tabular Data}\newline
\textbf{Motivation.}
Real HDLSS tabular datasets (e.g., omics) exhibit behaviors that differ qualitatively from classical asymptotics~\citep{w1,w2} and often show (i) strong local correlation groups~\citep{w3}, (ii) sparse cross-group dependencies, (iii) heavy-tailed / non-Gaussian marginals, (iv) heteroscedastic noise (mean-variance coupling / overdispersion)~\citep{w4,w5,w6}, and often (v) structured missingness~\citep{w7, w25}. We formalize a generative family that captures these properties while remaining analyzable in the $n\ll m$ regime.
Our model preserves the core intuition of block structure (namely, shared latent factor per group~\citep{w20}) but extends it beyond simple Gaussian mean-shifts to support realistic marginals, mixed data types, and cross-block coupling. We assume each feature is distinct.\newline
\textbf{Latent Block Structure with Observed Feature Permutation.}
Let $X \in \mathbb{R}^m$ be a sample with $m$ features and optional label $Y \in \{1,\dots,C\}$. Assume an (unobserved) partition of the canonical feature indices into $M$ disjoint blocks $\{\mathcal{S}_t\}_{t=1}^M$ of sizes $s_t$ (not necessarily equal), with $\sum_{t=1}^M s_t = m$. To decouple the canonical block index space from the observed feature order, we introduce an optional permutation $\pi\in\mathfrak{S}_m$ relating $\tilde X$ to $X$ via $X=\tilde X_{\pi}$ (and similarly $R=\tilde R_{\pi}$), where $R\in\{0,1\}^m$ is the binary observation mask ($R_j=1$ observed, $R_j=0$ missing/\texttt{NA}). 
Currently, we fit the model in the given column order (equivalently taking $\pi=\mathrm{id}$ during training) and optionally apply a fixed permutation $\pi$ only at generation time (identity by default; otherwise a single random shuffle) to mimic arbitrary dataset feature order. More generally, $\pi$ can be treated as unknown and estimated from data using a feature ordering~\citep{rot} procedure (e.g., dependence-graph seriation or clustering-based ordering~\citep{tabseq, dynatab}) that better aligns observed coordinates with the latent block structure.
\begin{definition}[Block-subunit HDLSS generative model]
\label{def:block_subunit_hdlss_gen}
We fix blocks $\{\mathcal{S}_t\}_{t=1}^M$ in canonical index space. For each sample, generate:
\begin{enumerate}
\item \textbf{Label (optional):} $Y \sim p(Y)$.
\item \textbf{Block latents with cross-block dependence:} $h=(h_1,\dots,h_M) \in \mathbb{R}^M$ from a label-conditional prior by Eq.~\ref{eq:h_prior} where $p_\theta$ captures cross-block dependence (e.g., sparse graphical prior, normalizing flow, diffusion, or mixture)~\citep{w8,w9,w10,w14,w15}.
\begin{equation}
\label{eq:h_prior}
h \sim p_\theta(h \mid Y)
\end{equation}
\item \textbf{Missingness mask (optional but realistic):} for each feature $j$ in canonical space we get Eq.~\ref{eq:missing_mask} reflecting structured missing-data mechanisms commonly modeled in statistical missingness theory~\citep{w7}.
\begin{equation}
\label{eq:missing_mask}
R_j \sim \mathrm{Bernoulli}\!\big(\rho_{\theta,j}(h,Y)\big),
\qquad
\tilde X_j=\mathrm{NA}\;\text{if } R_j=0
\end{equation}
\item \textbf{Block-wise emissions (subunit measurements):} For each block $t$ and each $j\in \mathcal{S}_t$ with $R_j=1$, we draw an intermediate Gaussian copula variable~\citep{w19} in Eq.~\ref{eq:copula_gauss} and map to $U_j = \Phi(Z_j) \in (0,1)$, and define the observed canonical feature via an inverse marginal CDF by Eq.~\ref{eq:inverse_marginal}.
\begin{equation}
\label{eq:copula_gauss}
Z_j = a_j h_t + b_j(Y) + \xi_j,
\qquad
\xi_j \sim \mathcal{N}\!\big(0,\sigma_j^2(h_t,Y)\big)
\end{equation}
\begin{equation}
\label{eq:inverse_marginal}
U_j = \Phi(Z_j),
\qquad
\tilde X_j = F^{-1}_{j,Y}(U_j)
\end{equation}
Here $F_{j,Y}$ is a (learned) marginal CDF allowing heavy tails, skew, and non-Gaussianity, matching common departures from Gaussian assumptions in some HDLSS data, such as omics datasets~\citep{w4,w5,w6}.
\item \textbf{(Optional) permutation to observed space:} we fix a permutation $\pi$ (default $\pi=\mathrm{id}$) and output in Eq.~\ref{eq:gen_perm_output}. 
\begin{equation}
\label{eq:gen_perm_output}
X = \tilde X_{\pi}, \qquad R = R_{\pi}
\end{equation}
\end{enumerate}
\end{definition}
\textbf{Remarks.}
Eq.~\ref{eq:copula_gauss}-\ref{eq:inverse_marginal} yields a Gaussian copula dependence structure controlled by $h_t$, while $F_{j,Y}$ provides realistic feature-wise marginals~\citep{w19, w18}. Heteroscedasticity 
is captured via $\sigma_j^2(h_t,Y)$, and missingness 
is modeled via Eq.~\ref{eq:missing_mask}~\citep{w4,w5,w6,w7}. This model strictly generalizes the simple additive Gaussian block factor model (shared factor + i.i.d.\ noise)~\citep{w20, w21}.\newline \newline
\textbf{Deep Generative Parameterizations of the Block Prior.}
A key advantage in HDLSS is to learn global dependence at the block-latent level ($M\ll m$), where sample complexity is far more favorable than learning dense dependence directly in $\mathbb{R}^m$~\citep{w1,w2}.\newline
\textbf{Option A: Diffusion on $h$.}
We define a forward noising process $q(h_t \mid h_0)$ and train a conditional denoiser $\epsilon_\theta(\cdot)$ by Eq.~\ref{eq:diffusion_loss}. Then we sample $h_0 \sim p_\theta(h\mid Y)$ via reverse diffusion~\citep{w8,w9} (see also score/SDE views~\citep{w10}) and decode to $\tilde X$ using Eq.~\ref{eq:copula_gauss}-\ref{eq:inverse_marginal}. Diffusion models have also been adapted successfully to mixed-type tabular synthesis~\citep{w11}, and can be run in compact latent spaces~\citep{w12,w13}.\newline
\begin{equation}
\label{eq:diffusion_loss}
\min_\theta \;
\mathbb{E}_{(h_0,Y)}\; \mathbb{E}_{t,\epsilon}
\Big[\big\|\epsilon - \epsilon_\theta(h_t,t,Y)\big\|_2^2\Big]
\end{equation}
\textbf{Option B: Normalizing flow on $h$.}
Let $h = f_\theta(\nu,Y)$ with base $\nu\sim \mathcal{N}(0,I)$. Then $p_\theta(h\mid Y)$ is tractable and can be trained by maximum likelihood (conditional log-likelihood)~\citep{w14,w15} on inferred latents (or jointly with a learned inference network).\newline
\textbf{Option C: Mixtures / graphical priors.}
For interpretability, $p_\theta(h\mid Y)$ can be a sparse Gaussian graphical model or a mixture of block states.\newline \newline
\textbf{A Likelihood Factorization and a Block-Level Learning Signal.}
Let $\tilde X_{\mathcal{S}_t}$ denote the subvector in block $t$ (canonical space). Under conditional independence of emissions given $h$ (as in Definition~\ref{def:block_subunit_hdlss_gen}), the joint likelihood factorizes across blocks as defined in Eq.~\ref{eq:block_factorization}.
\begin{equation}
\label{eq:block_factorization}
p(\tilde X, R \mid h, Y)
=
\prod_{t=1}^M
\prod_{j\in \mathcal{S}_t}
p(R_j \mid h_t,Y)\;
p(\tilde X_j \mid h_t,Y,R_j=1)
\end{equation}
Eq. \ref{eq:block_factorization} provides a direct learning signal for block structure: each $h_t$ is responsible for explaining a coherent subset of correlated coordinates~\citep{w3}, even if the observed coordinates are arbitrarily ordered (i.e., under an unknown permutation $\pi$).\newline
\textbf{Permutation-Invariant Identifiability (Canonical Up to Block Permutations).}
Because the observed feature order is arbitrary, identifiability should be stated modulo permutation~\citep{w22}. We capture the standard notion: blocks are identifiable up to relabeling when they induce distinct dependence patterns.
\begin{assumption}[Distinct block dependence]
\label{ass:distinct_blocks}
For any two blocks $t\neq t'$, the pairwise dependence structure among coordinates in $\mathcal{S}_t$
differs from that in $\mathcal{S}_{t'}$ (e.g., different correlation spectra or different copula parameters).
In addition, each feature is assumed to be unique, so coordinates are not exchangeable copies, even within a block.
Cross-block dependencies are sparse in $p_\theta(h\mid Y)$.
\end{assumption}
\begin{proposition}[Identifiability up to block permutation]
\label{prop:identifiability}
Under Assumption~\ref{ass:distinct_blocks} and with $a_j\neq 0$ for features that participate in block dependence, the partition $\{\mathcal{S}_t\}$ is identifiable from the population distribution of $X$ up to permutation of block labels, even if the observed coordinates are permuted by an unknown $\pi$~\citep{w22}.
\end{proposition}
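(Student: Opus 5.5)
The plan is to recover the partition from a population-level dependence graph on the observed coordinates, and to show that this graph is defined intrinsically, without any reference to the coordinate order. Define the graph $G$ on $\{1,\dots,m\}$ by joining $i$ and $j$ when $X_i$ and $X_j$ are dependent. Under the model, dependence is measured through the rank or copula scale, for example by normal-scores correlations $\Phi^{-1}(F_{i}(X_i))$. Relabeling coordinates by $\pi$ maps $G$ to an isomorphic graph. So any partition defined as a function of $G$ together with the pairwise copulas, and invariant under graph isomorphism, is recovered as $\pi(\{\mathcal{S}_t\})$. That set is exactly the canonical partition, described as an unordered collection of blocks in observed labels, which is the ``up to block permutation'' claim.

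\textbf{Step 1 (within-block dependence).} Condition on $Y$ and, first, on $R_i=R_j=1$. For $i,j\in\mathcal{S}_t$, Eq.~\ref{eq:copula_gauss} gives $\mathrm{Cov}(Z_i,Z_j\mid Y)=a_ia_j\mathrm{Var}(h_t\mid Y)$ plus heteroscedastic cross terms. With $a_i,a_j\neq 0$ and a nondegenerate $h_t$, $Z_i$ and $Z_j$ are dependent. The maps $\Phi$ and $F_{j,Y}^{-1}$ are strictly monotone, so this dependence transfers to $X_i$ and $X_j$ through their copula. Features with $a_j=0$ carry no block signal; the statement already excludes them, and they can be set aside as isolated vertices.

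\textbf{Step 2 (blocks as communities).} Under Assumption~\ref{ass:distinct_blocks}, cross-block dependence enters only through the sparse coupling of $h_t$ and $h_{t'}$. I would formalize the intended meaning of ``sparse'' as follows: the block-level conditional-independence structure is strictly weaker than the within-block one. A clean way to do this is through partial dependence. Given the other coordinates of the same block, two within-block features stay linked, because each is a noisy proxy for $h_t$. A cross-block pair instead has a dependence of the form $a_ia_j\mathrm{Cov}(h_t,h_{t'})$, which factorizes as a rank-one outer product across the two blocks. The plan is to characterize each block as a maximal set $S$ with the following property: the normal-scores covariance restricted to $S$ is the sum of a rank-one matrix and a diagonal matrix, with all loadings nonzero, and $S$ is not mergeable. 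Merging two blocks raises the rank of the factor part to two unless $h_t$ and $h_{t'}$ are perfectly correlated. Sparsity, or simply nondegeneracy of $p_\theta(h\mid Y)$, excludes that perfect-correlation case. This is essentially the identifiability of the loading pattern in a one-factor-per-block model.

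\textbf{Step 3 (use of the distinctness assumption).} If the rank criterion leaves a residual ambiguity, the distinct correlation spectra in Assumption~\ref{ass:distinct_blocks} separate candidate blocks. For instance, two overlapping candidate groupings cannot both match the spectra of their blocks. The assumption that features are distinct and non-exchangeable rules out degenerate symmetric relabelings within a block. The only remaining freedom is the order of the $M$ block labels.

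\textbf{Main obstacle.} The hardest step is Step 2 once the non-Gaussian parts of the model are included: the heteroscedastic variance $\sigma_j^2(h_t,Y)$ and the missingness that depends on $h$. With these, the normal-scores vector is no longer exactly a Gaussian factor model. I would first handle this by conditioning on $Y$ and working with $\mathbb{E}[Z_j\mid h_t,Y]=a_jh_t+b_j(Y)$. Covariances then decompose by the law of total covariance, and the heteroscedastic noise is conditionally independent across $j$, so it contributes only to the diagonal. Missingness is handled by working on pairwise-observed subpopulations, which requires $\rho_{\theta,j}$ to be bounded away from zero. The result is the rank-one-plus-diagonal structure needed above, applied to the pairwise-observed covariances. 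A Kruskal-style or factor-analysis uniqueness argument then finishes the proof, provided each block has size $s_t\ge 3$. I expect to need to add that size condition explicitly.
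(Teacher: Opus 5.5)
Your outline has the same skeleton as the paper's sketch: a rank-one-plus-noise signature in copula-Gaussian space, a clustering that separates within-block from cross-block dependence, and invariance of dependence relations under $\pi$. The step you use to make the clustering exact fails, however.

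\textbf{The maximal-set criterion.} Let $\Gamma=\mathrm{Cov}(h\mid Y)$ and $\Sigma=\mathrm{Cov}(Z\mid Y)$, so $\Sigma_{jk}=a_ja_k\Gamma_{tt'}$ for $j\in\mathcal{S}_t$, $k\in\mathcal{S}_{t'}$, $j\neq k$. Suppose $\Gamma_{tt'}\neq 0$ and $i\in\mathcal{S}_t$. Then $\mathcal{S}_{t'}\cup\{i\}$ is also rank-one-plus-diagonal with all loadings nonzero. Take $\lambda_j=a_j\sqrt{\Gamma_{t't'}}$ on $\mathcal{S}_{t'}$ and $\lambda_i=a_i\Gamma_{tt'}/\sqrt{\Gamma_{t't'}}$. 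The leftover diagonal entry at $i$ is $a_i^2(\Gamma_{tt}-\Gamma_{tt'}^2/\Gamma_{t't'})+\mathbb{E}[\sigma_i^2(h_t,Y)\mid Y]\ge 0$ by Cauchy--Schwarz. So a block with any cross-coupling is never maximal, and maximal sets are not unique, since any such single $i$ can be appended. Your merging argument only excludes sets that contain two coordinates from each of two blocks. The free diagonal absorbs a single foreign coordinate, and assuming $s_t\ge 3$ does not change this.

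A pairwise criterion does work: declare $i\sim j$ iff $\Sigma_{ik}\Sigma_{jl}=\Sigma_{il}\Sigma_{jk}$ for all $k,l\notin\{i,j\}$. Same-block pairs satisfy every such tetrad. For $i\in\mathcal{S}_t$, $j\in\mathcal{S}_{t'}$, choosing $k\in\mathcal{S}_t\setminus\{i\}$ and $l\in\mathcal{S}_{t'}\setminus\{j\}$ gives $a_ia_ja_ka_l(\Gamma_{tt}\Gamma_{t't'}-\Gamma_{tt'}^2)\neq 0$. This relation is permutation-equivariant and invariant under rescaling of coordinates. It needs only $s_t\ge 2$ and $|\mathrm{corr}(h_t,h_{t'}\mid Y)|<1$, and it makes Step 3 (and the distinct-spectra part of the assumption) unnecessary. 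Step 3 would not have worked as written anyway: ``blocks differ from one another'' gives no test for rejecting a wrong candidate grouping.

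\textbf{The size condition.} You are right that a size condition must be added, and it is necessary, not merely technical. Take $M=2$, standard bivariate Gaussian $h$ with correlation $\rho\neq 0$, homoscedastic noise, and $\mathcal{S}_1=\{i\}$. Writing $h_1=\rho h_2+\sqrt{1-\rho^2}\,\eta$ shows that $X$ has the same law as a one-block model with loading $a_i\rho$ and noise variance inflated by $a_i^2(1-\rho^2)$. Block $\mathcal{S}_1$ has no within-block pairs, so the assumption does not exclude this, and the proposition as stated fails for singleton blocks.

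\textbf{The ``main obstacle'' step.} Your way of disposing of it does not go through, because the law-of-total-covariance computation describes $\mathrm{Cov}(Z\mid Y)$, which is not observable. Since $X_j$ is a strictly increasing function of $Z_j$, your normal scores are $\Phi^{-1}(F_{X_j\mid Y}(X_j))=\Phi^{-1}(F_{Z_j\mid Y}(Z_j))$. These are affine in $Z_j$ only when $Z_j\mid Y$ is Gaussian, i.e.\ Gaussian $h\mid Y$ and homoscedastic $\xi_j$. With a diffusion, flow or mixture prior, or with $\sigma_j^2(h_t,Y)$ genuinely varying, they are nonlinear monotone images of $Z_j$. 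Then neither the rank-one-plus-diagonal form nor the tetrads transfer. Since $F_{j,Y}$ is unknown, you would need a separate nonparametric identification argument built on $X_i\perp (X_k)_{k\neq i}\mid h_t,Y$ for $i\in\mathcal{S}_t$. Three-measurement results for continuous latents are the relevant tool; Kruskal's theorem concerns discrete latents.

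\textbf{Missingness.} When $\rho_{\theta,j}$ depends on $h$, conditioning on $R_i=R_j=1$ reweights the law of $h$ by $\rho_{\theta,i}\rho_{\theta,j}$, a different tilt for each pair. The pairwise-observed covariances are then not entries of a single matrix $A\Gamma A^\top+D$ (with $A$ the block-sparse loadings and $D$ diagonal), and the tetrad identities break. Bounding $\rho_{\theta,j}$ away from zero does not address this. Missingness depending only on $Y$ would be harmless.

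\textbf{Conditioning on $Y$.} Throughout you condition on $Y$, which uses the joint law of $(X,Y)$. From the law of $X$ alone, the shifts $b_j(Y)$ add a term of rank at most $C-1$ with generically dense loadings, which destroys the block pattern.

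The paper's sketch also glosses over all of these points. What can actually be proved along your lines is the Gaussian-copula case with $Y$ observed, $h$-independent missingness, and $s_t\ge 2$, using the tetrad criterion.
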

\begin{proof}[Proof sketch]
Because within-block dependence is induced through a shared scalar $h_t$ (plus block-specific emissions),
each block yields a characteristic dependence signature (e.g., rank-1 plus noise structure in copula Gaussian space). Distinctness implies there exists a clustering of coordinates that maximizes within-group dependence and minimizes cross-group dependence (e.g., as commonly done to identify correlated modules in high-dimensional biology ~\citep{w3}). Unknown coordinate permutation changes feature order but not these dependence relations, hence the recovered partition matches the true one up to block relabeling.
\end{proof}
\textbf{HDLSS Stability: Why Block Latents Are Learnable When $n\ll m$.}
The model is designed so the effective degrees of freedom scale with $M$ rather than $m$, where $M$ is the number of latent blocks (equivalently, the dimension of the block-latent vector $h\in\mathbb{R}^M$), aligning with HDLSS analyses that emphasize low-dimensional structure underlying high-dimensional observations~\citep{w1,w2}.
\begin{proposition}[Block-latent sample complexity advantage (informal)]
\label{prop:sample_complexity}
Assume $p_\theta(h\mid Y)$ has $O(M)$ to $O(M\log M)$ effective parameters (e.g., sparse couplings), and emissions $p(\tilde X_j \mid h_t,Y)$ share parameters within blocks. Then consistent estimation of the generative mechanism can be achieved with sample sizes scaling primarily with $M$ (up to log factors and emission complexity), rather than with $m$~\citep{w1,w2}. In contrast, unstructured generators that model dense dependence directly in $\mathbb{R}^m$ face parameter growth at least linear (often quadratic) in $m$, which is ill-conditioned in HDLSS regimes~\citep{w1,w2}.
\end{proposition}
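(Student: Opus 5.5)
The plan is to make the informal statement precise through a parameter-counting and uniform-convergence argument for a parametric M-estimator, with the block structure known, or recovered first via Proposition~\ref{prop:identifiability}.

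\textbf{Setup.} I would write the full parameter as $\Theta=(\theta,\eta)$. Here $\theta$ parameterizes the prior $p_\theta(h\mid Y)$ with effective dimension $d_\theta=O(M\log M)$. The emission parameter $\eta$ collects, for each block $t$, the parameters of $\{a_j,b_j(\cdot),\sigma_j^2(\cdot),\rho_{\theta,j},F_{j,Y}\}_{j\in\mathcal S_t}$. Under the hypothesis that emissions share parameters within blocks, these reduce to a block-level vector $\eta_t$ of dimension at most $d_e$, the ``emission complexity''. The total dimension is then $d=d_\theta+Md_e=O(M(\log M+d_e))$, with no dependence on $m$.

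\textbf{Approach.} The key structural input is the factorization Eq.~\ref{eq:block_factorization}. It shows that the complete-data log-likelihood splits into a prior term $\log p_\theta(h\mid Y)$ and a sum of $M$ block terms. Each block term depends only on $(h_t,\eta_t)$ and the $s_t$ coordinates of block $t$.

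\textbf{Key steps.} (i) Use Proposition~\ref{prop:identifiability} to fix the partition up to relabeling. Also assume identifiability of $\Theta$ given the partition, which follows from $a_j\neq 0$ and the Gaussian-copula link. (ii) Bound the complexity of the log-likelihood class $\{\log p_\Theta\}$. Its covering number at scale $\varepsilon$ is $(C/\varepsilon)^{d}$, up to Lipschitz constants in $\Theta$ on a compact parameter set. (iii) Apply a standard uniform law of large numbers and M-estimation rate, e.g.\ a van der Vaart--Wellner bracketing bound, to get
\[
\sup_\Theta \Bigl|\tfrac1n\textstyle\sum_i \log p_\Theta(X_i)-\mathbb E\log p_\Theta(X)\Bigr| \lesssim \sqrt{d\log n/n}.
\]
Together with identifiability, this gives consistency once $n\gg d\log n$, i.e.\ $n$ scaling with $M$ up to log and emission factors. (iv) For the contrast, observe that an unstructured model with dense dependence in $\mathbb R^m$ contains at least a covariance or copula correlation matrix, which has $m(m-1)/2$ free parameters. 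Minimax lower bounds for covariance estimation, via Fano's method over a packing of correlation matrices, require $n\gtrsim m$ even in operator norm. If $n<m$, the sample covariance is singular, which gives the ill-conditioning claim.

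\textbf{Main obstacle.} The hard step is (ii)--(iii) in the presence of the latent $h$ and of per-feature quantities that do not truly share parameters. The marginals $F_{j,Y}$ are inherently per-feature, so they contribute $m$ one-dimensional estimation problems. I would handle them separately. Each $F_{j,Y}$ is estimated from its own column by a DKW bound at rate $\sqrt{\log(m)/n}$ uniformly over $j$, so the dependence on $m$ is only logarithmic. I would then plug the estimated copula transforms into the block-level estimation, accepting this log factor as part of the ``emission complexity''. The Lipschitz control of the marginal likelihood after integrating out $h$ also needs compactness and boundedness assumptions (bounded $\sigma_j^2$ away from zero, bounded $a_j$), which I would state explicitly. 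This is also why the statement stays informal.
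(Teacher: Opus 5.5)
The paper gives no proof of this proposition. It is labelled informal, and its only support is the heuristic that follows it: a degrees-of-freedom count ($O(M)$ to $O(M\log M)$ prior parameters plus block-shared emissions, versus $\Omega(m)$ to $O(m^2)$ for dense models), backed by citations to the HDLSS literature. Your M-estimation formalization is therefore a different and much more ambitious route. Your contrast step (iv), using the $\sqrt{m/n}$ minimax rate for covariance estimation in operator norm, is sound and strictly stronger than the paper's parameter count. The gap is in step (iii), which is the step meant to carry the $m$-independence, and it does not do so as written.

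The functions $\log p_\Theta(X)$ are log-densities of an $m$-dimensional vector. By Eq.~\ref{eq:block_factorization} they integrate a product of $m$ per-coordinate factors, so both the envelope and the Lipschitz function of $\Theta\mapsto\log p_\Theta(X)$ are of order $m$ (order $s_t$ in each shared $\eta_t$). The bracketing bound you invoke therefore gives roughly $m\sqrt{d\log n/n}$, not $\sqrt{d\log n/n}$. The $m$ is hidden in ``up to Lipschitz constants''. It is also genuine: a single misspecified $\eta$ already makes $\log p_\Theta(X)$ fluctuate at a scale that grows with $m$.

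Meanwhile the separation margin does not grow with $m$ in the prior directions. With emissions held fixed, data processing gives $\mathrm{KL}(p_{\theta_0,\eta}(X)\,\|\,p_{\theta,\eta}(X))\le\mathrm{KL}(p_{\theta_0}(h\mid Y)\,\|\,p_{\theta}(h\mid Y))=O(1)$. Your argmax argument then needs $n$ to grow with $m$ (of order $m^2d\log n$ from your own bound), which is exactly the scaling the proposition denies.

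There are two ways to repair this.
\begin{itemize}
\item Localize. Work with $\log(p_\Theta/p_{\Theta_0})$ plus peeling, or use a Hellinger rate theorem (Wong--Shen, van de Geer) in which Lipschitz constants enter only logarithmically through the bracketing entropy. Pair this with a local identifiability modulus that you show does not deteriorate as blocks grow.
\item Take the two-stage route that mirrors Algorithm~\ref{alg:block_subunit_train}. First form block scores $\hat h_t$ by averaging copula scores over $\mathcal{S}_t$; by the mechanism of Lemma~\ref{lem:snr_scaling} their error is $O_P(s_t^{-1/2})$. Then fit the prior on $\hat h\in\mathbb{R}^M$. Here $m$ visibly only helps.
\end{itemize}

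Three further points need fixing.
\begin{itemize}
\item \textbf{Partition recovery.} Proposition~\ref{prop:identifiability} is a population statement and gives no finite-sample recovery of the partition. If you instead treat the partition as a parameter inside the M-estimator, it adds entropy of about $\log M^m=m\log M$, which is linear in $m$. You need a separate recovery step under a quantitative within/between-block gap $\gamma$, which Assumption~\ref{ass:distinct_blocks} does not supply. For example, thresholding Kendall's $\tau$ uniformly over the $\binom{m}{2}$ pairs costs $n\gtrsim\log m/\gamma^2$.
\item \textbf{Identifiability of $\Theta$.} $\Theta$ is not identified by $a_j\neq0$ and the copula link. The reparametrization $h_t\mapsto c_th_t+d_t$, $a_j\mapsto a_j/c_t$, $b_j(Y)\mapsto b_j(Y)-a_jd_t/c_t$, with the prior, $\sigma_j^2$ and $\rho_{\theta,j}$ re-expressed accordingly, leaves the law of $X$ unchanged. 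Consistency must be stated modulo this group or under a normalization.
\item \textbf{What the marginal step estimates.} Your DKW union bound is correct. However, the column-wise empirical CDF estimates the marginal of $\tilde X_j$, which equals $F_{j,Y}$ only if $Z_j\mid Y\sim\mathcal{N}(0,1)$. A diffusion or flow prior with heteroscedastic noise does not guarantee this, so in general only the composite monotone map $F^{-1}_{j,Y}\circ\Phi$ composed with the law of $Z_j$ is identified. The clean formulation takes the copula of $X$ as the parametric target, since it equals the copula of $Z$ and depends only on $(\theta,\eta)$, and treats the marginals as nuisance. Estimating it by a rank-based pseudo-likelihood also avoids the $\Phi^{-1}$ tail problem your plug-in runs into.
\end{itemize}
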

\textbf{Interpretation.}
Even when $m$ is in the thousands and $n$ is in the tens, block-level dependence can be learned because it compresses the global structure into $M\ll m$ latent variables and block-shared emissions~\citep{w1,w2}.\newline
\textbf{SNR Scaling Within a Block (Connection to HDLSS Separability).}
To connect to the classical HDLSS 
characterization, we show how aggregation within a block boosts signal-to-noise ratio.
\begin{lemma}[Within-block SNR scaling under mean-shift emissions]
\label{lem:snr_scaling}
Consider a single block $\mathcal{S}$ of size $s$ and a simplified continuous emission
$\tilde X_j \mid Y=y \sim \mathcal{N}(\mu_y, \sigma^2)$ i.i.d.\ for $j\in\mathcal{S}$.
Let $\bar X_{\mathcal{S}} = \frac{1}{s}\sum_{j\in\mathcal{S}}\tilde X_j$.
Then we get Eq.~\ref{eq:snr_scaling}.
\begin{equation}
\label{eq:snr_scaling}
\mathrm{Var}(\bar X_{\mathcal{S}} \mid Y) = \sigma^2/s,
\qquad
\mathrm{SNR}(\bar X_{\mathcal{S}}) = \frac{(\mu_1-\mu_0)^2}{\sigma^2/s} = s\cdot \frac{(\mu_1-\mu_0)^2}{\sigma^2}
\end{equation}
where $\mu_0$ and $\mu_1$ denote the class-conditional means for $Y=0$ and $Y=1$, respectively, ie, $\mu_y \equiv \mathbb{E}[\tilde X_j\mid Y=y]$.
\end{lemma}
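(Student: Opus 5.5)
The plan is a direct second-moment computation, carried out conditionally on $Y=y$. The lemma's hypothesis states that, given $Y=y$, the coordinates $\tilde X_j$ for $j\in\mathcal{S}$ are i.i.d.\ $\mathcal{N}(\mu_y,\sigma^2)$. In the language of Definition~\ref{def:block_subunit_hdlss_gen}, this is the degenerate case in which the shared subunit contribution $a_j h_t$ is absorbed into the class mean, or $a_j=0$. In that case there is no residual within-block correlation, and I take this independence as given rather than deriving it.

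The first step is to compute the conditional mean. By linearity of expectation,
\[
\mathbb{E}[\bar X_{\mathcal{S}}\mid Y=y]=\tfrac1s\sum_{j\in\mathcal{S}}\mu_y=\mu_y .
\]
So averaging preserves the class-conditional mean, and the mean gap between classes is still $\mu_1-\mu_0$.

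The second step is to compute the conditional variance. Conditional independence makes all cross-covariances vanish, so
\[
\mathrm{Var}(\bar X_{\mathcal{S}}\mid Y=y)=\tfrac{1}{s^2}\sum_{j\in\mathcal{S}}\mathrm{Var}(\tilde X_j\mid Y=y)=\tfrac{1}{s^2}\cdot s\sigma^2=\sigma^2/s .
\]
This holds for each $y$, which gives the first identity in Eq.~\ref{eq:snr_scaling}. Gaussianity is not needed here; it only shows that $\bar X_{\mathcal{S}}\mid Y=y\sim\mathcal{N}(\mu_y,\sigma^2/s)$. That distributional fact is what makes the SNR the natural separability measure, since Bayes error is a monotone function of it.

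The third step is to apply the definition of SNR. I take the SNR of a scalar statistic $T$ to be the squared mean gap between the two classes divided by the common within-class variance. This is the convention implicit in Eq.~\ref{eq:snr_scaling} (a Fisher ratio), and it applies because the class-conditional variance $\sigma^2/s$ does not depend on $y$. Substituting the first two steps gives
\[
\mathrm{SNR}(\bar X_{\mathcal{S}})=\frac{(\mu_1-\mu_0)^2}{\sigma^2/s}=s\cdot\frac{(\mu_1-\mu_0)^2}{\sigma^2},
\]
that is, $s$ times the single-feature SNR.

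I do not expect a real obstacle. The only delicate point is stating clearly the conditional independence assumption and the SNR convention, since the statement leaves both implicit. If the full emission of Eq.~\ref{eq:copula_gauss} were used instead, a shared $a_jh_t$ term would add a covariance term $\bar a^2\mathrm{Var}(h_t\mid Y)$ that does not shrink with $s$, and the linear scaling would saturate. I would add this as a remark to mark the limits of the lemma.
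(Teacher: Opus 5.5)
Your proposal is correct and matches the paper, which gives no separate proof beyond the displayed identity itself: linearity gives $\mathbb{E}[\bar X_{\mathcal{S}}\mid Y=y]=\mu_y$, conditional independence gives $\mathrm{Var}(\bar X_{\mathcal{S}}\mid Y)=\sigma^2/s$, and substituting into the Fisher-ratio definition of SNR yields the factor $s$. Your closing remark is also accurate and usefully marks the limits of the lemma: conditional i.i.d.\ within a block requires $a_j=0$ or $h_t$ almost surely determined by $Y$, and otherwise the shared term contributes $\bar a^2\,\mathrm{Var}(h_t\mid Y)$, which does not shrink with $s$.
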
 
\textbf{Takeaway.}
Blocks provide a natural mechanism for SNR amplification, which is critical when $n$ is very small, as in HDLSS regimes~\citep{w1,w2}. 
This intuition can be extended to non-Gaussian marginals and copula dependence via Eq.~\ref{eq:inverse_marginal}.\newline
\textbf{Synthetic HDLSS Dataset Generation as a Controlled Benchmark Suite.}
Definition~\ref{def:block_subunit_hdlss_gen} yields a family of synthetic HDLSS benchmarks with knobs controlling:
(i) number of blocks $M$ and size distribution $\{s_t\}$,
(ii) cross-block sparsity/strength in $p_\theta(h\mid Y)$,
(iii) marginal tail-heaviness via $F_{j,Y}$,
(iv) heteroscedasticity via $\sigma_j^2(h_t,Y)$,
(v) missingness mechanisms via $\rho_{\theta,j}(h,Y)$,
and (vi) label-dependence via $b_j(Y)$ or label-conditional $p_\theta(h\mid Y)$. These knobs enable systematic evaluation of tabular generative modeling and synthetic-pretraining in the HDLSS regime~\citep{w1,w2,w16,w17}.\newline \newline
\textbf{Generation Procedure.}
After training, synthetic generation follows a direct forward sampling pipeline 
that operates in the low-dimensional block-latent space and then decodes into the original $m$-dimensional feature space. The procedure produces $(X,R,Y)$ and is Summ. in Alg.~\ref{alg:block_subunit_gen}. Crucially, although the model uses only $M\ll m$ latent degrees of freedom through $h\in\mathbb{R}^M$, the output $X\in\mathbb{R}^m$ is not compressed: each feature $j$ is generated explicitly via its block subunit and its learned marginal transform.\newline
\textbf{(1) Optional label sampling.}
If class-conditional generation is enabled, we either (i) draw $Y\sim p(Y)$ to match the empirical class prior,
or (ii) fix $Y=c$ to generate class-specific synthetic samples. If labels are unavailable or unconditional generation is desired, we omit $Y$ and sample from $p_\theta(h)$.\newline
\textbf{(2) Sample block latents from 
learned prior.} We first sample the block latent vector in Eq.~\ref{eq:gen_h_sample} where $p_\theta$ is implemented as a deep prior in $\mathbb{R}^M$. In our main instantiation, $p_\theta(h\mid Y)$ is a diffusion prior: sampling is performed by running the reverse diffusion chain from Gaussian noise to get $h$.
\begin{equation}
\label{eq:gen_h_sample}
h=(h_1,\dots,h_M)\sim p_\theta(h\mid Y)
\end{equation}
This concentrates the model's global dependence learning in latent space, avoiding ill-conditioned
high-dimensional density modeling in $\mathbb{R}^m$ in HDLSS settings~\citep{w1,w2}. (For related tabular diffusion variants and latent-space diffusion instantiations, see~\citep{w11,w13}.)\newline
\textbf{(3) Sample missingness and emit features block-wise.}
Given $h$ and (optionally) $Y$, we generate a binary observed-mask $R\in\{0,1\}^m$ using the missingness model by Eq.~\ref{eq:missing_mask} which can be unconditional or class-conditional and may vary across features, consistent with structured missing-data modeling~\citep{w7}. If $R_j=0$ we set $\tilde X_j=\texttt{NA}$; otherwise we emit $\tilde X_j$ using the block subunit corresponding to the block membership of feature $j$. For each continuous feature $j\in\mathcal{S}_t$ with $R_j=1$, we first sample a Gaussian copula variable by Eq.~\ref{eq:copula_gauss} which maps it to a uniform variable $U_j=\Phi(Z_j)$, and then apply the learned inverse marginal CDF by Eq.~\ref{eq:inverse_marginal}.
This yields (i) dependence driven by the shared block subunit $h_t$ in copula space and (ii) realistic, potentially heavy-tailed feature-wise marginals through $F_{j,Y}$, consistent with common departures from Gaussianity in omics-like measurements~\citep{w4,w5,w6}. \newline
\textbf{(4) Optional permutation to observed feature order.}
Finally, to account for arbitrary ordering in real datasets, we apply a permutation $\pi$ (identity if not used) and output in Eq.\ref{eq:gen_perm_output}. This decouples the canonical block structure from the observed feature indexing, while preserving the same block-induced dependence relations.\newline
\textbf{Summary.}
Overall, generation follows the chain $Y \rightarrow h \rightarrow (R,\tilde X) \rightarrow (R_\pi, X)$. The key HDLSS advantage is that global structure is learned and sampled in $\mathbb{R}^M$, while the full $m$-dimensional output is produced via block-conditioned emissions with learned marginals and missingness, yielding high-dimensional synthetic data~\citep{w1,w2}.\newline
\noindent\textbf{Implementation note.}
In practice, once $(\theta,\phi)$ are fitted, generation requires only sampling $h$ from the prior and a single forward decode pass over features; no optimization is performed at generation time. This makes sampling efficient even when $m$ is large (e.g., omics-scale) because all expensive dependence learning is confined to $M\ll m$.\newline
\textbf{Training Procedure.}
We train the model by combining (i) a modern prior on block latents $p_\theta(h\mid Y)$ (diffusion or flow)
and (ii) a block-factorized emission model $p_\phi(\tilde X,R\mid h,Y)$ (Eq.~\ref{eq:block_factorization}).
In HDLSS, learning a high-capacity generator directly in $\mathbb{R}^m$ is often ill-conditioned; instead,
we fit the global dependence in $\mathbb{R}^M$ and decode to $\mathbb{R}^m$ via block emissions.\newline
\textbf{Latent inference.}
To obtain training targets for $h$, we use an inference model $q_\psi(h\mid X,R,Y)$. When $n$ is very small, $q_\psi$ can be lightweight (e.g., per-block factor scores or a small encoder), since $M\ll m$ and the block factorization reduces the learning burden~\citep{w23, w24}.\newline
\textbf{Objective.}
We optimize a likelihood-based objective for the emission parameters $\phi$ and a prior objective for $\theta$. For flow priors, we can train $p_\theta(h\mid Y)$ by conditional maximum likelihood. For diffusion priors, we train the denoiser by score matching in $h$-space. Algorithms~\ref{alg:block_subunit_gen} and~\ref{alg:block_subunit_train} in Appendix~\ref{pseudocode} present BSTabDiff’s generation and training procedures: we sample block latents $h\in\mathbb{R}^M$ from a learned prior and decode them to features, and we fit the emission model and the latent prior jointly from data.\newline
\section{Experiments}
\label{exp}
In this section, we evaluate BSTabDiff against existing methods for HDLSS tabular generation.\newline
\textbf{Datasets.}
We evaluate tabular generative modeling in the HDLSS regime using eight publicly-available real-world  datasets from the repository~\citep{hdlsss} used by \citet{protogate}. 
The datasets span diverse domains and distributions, with very high dimensionality (roughly 2K to 20K+ features) and comparatively few samples. Several of these benchmarks have also been used in prior HDLSS-focused tabular studies (e.g., ProtoGate~\citep{protogate}, LSPIN/LLSPIN~\citep{spin}). Table~\ref{tab:hdlss_datasets} summarizes the datasets and their key properties. All datasets used in our experiments contain only numerical features and have no missing values.\newline
\begin{wraptable}{r}{0.48\textwidth}
\vspace{-0.8em}
\caption{Summary of the HDLSS datasets used in our experiments ($n$=\#samples, $m$=\#features, $C$=\#classes; distribution shown as class counts).}
\label{tab:hdlss_datasets}
\centering
\scriptsize
\setlength{\tabcolsep}{4pt}
\renewcommand{\arraystretch}{1.15}
\begin{tabular}{|l|l|r|r|r|l|}
\hline
\textbf{Abbr} & \textbf{Name} & \textbf{$n$} & \textbf{$m$} & \textbf{$C$} & \textbf{Distribution} \\
\hline
COL & Colon         & 62  & 2000  & 2 & [40, 22] \\
GLI & GLI-85        & 85  & 22283 & 2 & [26, 59] \\
LNG & Lung          & 203 & 3312  & 5 & [139, 17, 21, 20, 6] \\
PRS & Prostate   & 102 & 5966  & 2 & [50, 52] \\
SMK & SMK & 187 & 19993 & 2 & [90, 97] \\
TOX & TOX171        & 171 & 5748  & 4 & [45, 45, 39, 42] \\
AML & ALLAML        & 72  & 7129  & 2 & [47, 25] \\
ARC & Arcene        & 200 & 10000 & 2 & [112, 88] \\
\hline
\end{tabular}
\vspace{-0.8em}
\end{wraptable}
\textbf{Baselines.}
Given the large and growing set of tabular generative models, we benchmark BSTabDiff against representative, widely used methods from each major family. We further restrict comparisons to baselines that provide publicly available implementations, and we exclude LLM-based generators because their computational overhead becomes prohibitive in high-dimensional settings. Specifically, we include SMOTE~\citep{smote} as a classical baseline; CTGAN~\citep{w16}, CTAB-GAN~\citep{ctabgan}, and CTAB-GAN+~\citep{ctabgan+} as GAN-based methods; TVAE~\citep{w16} as a VAE-based approach; and TabDDPM~\citep{w11}, TabDiff~\citep{tabdiff}, BinaryDiffusion~\citep{bfiff}, and ForestDiffusion~\citep{fdiff} as diffusion-style generators.\newline
\textbf{Evaluation and Implementation.}
Our primary evaluation measure is Machine Learning Efficiency (MLE), following the standard protocol set by \citet{w11, stasy, codi, w13, tabdiff}. MLE measures how well classifiers trained on synthetic data perform on a real test set (TSTR), with the upper bound given by training and testing on real data (TRTR). High-quality synthetic data should yield models that approach or sometimes exceed TRTR performance. We report ML efficiency using two protocols: (i) the common approach of averaging efficiency relative to a classical baseline (logistic regression)~\citep{w16,ctabgan, w11}, and (ii) an evaluation relative to strong modern tabular models CatBoost~\citep{catboost}, TANDEM~\citep{tandem}, and TabPFN-2.5~\citep{tabpfn2.5} (for Colon), which represent competitive state-of-the-art baselines. We evaluate all classifiers using 5$\times$5 cross-validation (5 repeats $\times$ 5 folds = 25 runs), following the HDLSS evaluation protocol established by \citet{protogate}. We ran the experiments on the cluster using PyTorch with 8$\times$ NVIDIA RTX A6000 GPUs (49\,GB each; driver 535.216.01, CUDA 12.2) and 2$\times$ Intel Xeon Gold 5320 CPUs (52 cores / 104 threads) with 503\,GB RAM.\newline
\begin{table}[t]
\centering
\caption{BSTabDiff runtime and resource usage per dataset for one fixed-configuration training run. Peak GPU is maximum CUDA memory; CPU memory is end-of-run RSS.}
\label{tab:bstabdiff_compute}
\scriptsize
\setlength{\tabcolsep}{4.5pt}
\renewcommand{\arraystretch}{1.15}
\begin{tabular}{|l|c|c|c@{\hspace{1.2cm}}l|c|c|c|}
\hline
\textbf{Dataset} & \textbf{Time (s)} & \textbf{GPU (GiB)} & \textbf{CPU (GiB)} &
\textbf{Dataset} & \textbf{Time (s)} & \textbf{GPU (GiB)} & \textbf{CPU (GiB)} \\
\hline
GLI & 63.32 & 0.044 & 1.269 & AML & 42.67 & 0.028 & 1.220 \\
LNG & 53.05 & 0.031 & 1.223 & TOX & 49.71 & 0.032 & 1.243 \\
SMK & 59.61 & 0.043 & 1.284 & PRS & 51.63 & 0.027 & 1.219 \\
ARC & 52.12 & 0.032 & 1.257 & COL & 49.20 & 0.025 & 2.752 \\
\hline
\end{tabular}
\end{table}
\textbf{Computational analysis.}
BSTabDiff trains a low-dimensional latent prior in $\mathbb{R}^M$ (with $M\ll m$) and decodes to $m$ features via block-wise emissions. For a dataset with $n$ samples, $m$ features, $M$ blocks, diffusion horizon $T$, and prior training epochs $E$, the dominant costs are (i) block-latent inference and emission fitting, which are linear in the observed entries and scale as $\mathcal{O}(nm)$ (e.g., gaussianization/rank steps plus per-feature regression-like fits), and (ii) prior learning in latent space, which scales as $\mathcal{O}(E \cdot \min\{b,n\} \cdot T \cdot M \cdot H)$ for diffusion (or $\mathcal{O}(E \cdot \min\{b,n\} \cdot L \cdot M \cdot H)$ for a flow with $L$ coupling layers), where $b$ is batch size and $H$ is the prior network width. Crucially, unlike generators that model dense dependence directly in $\mathbb{R}^m$, the expensive global dependence learning term depends on $M$ rather than $m$, while the $\mathcal{O}(nm)$ emission-side work is a single pass over features and samples. This scaling is consistent with the empirical efficiency in Table~\ref{tab:bstabdiff_compute}: despite large $m$ (up to $22{,}283$ features in GLI-85 and $19{,}993$ in SMK), training remains fast (tens of seconds) and memory-light (peak GPU $\approx$0.025-0.044\,GiB; CPU RSS $\approx$1.22-1.28\,GiB for most datasets), reflecting an \(M\)-dimensional latent prior and simple block-conditioned decoding.\newline
\textbf{Performance evaluation.}
Table~\ref{tab:bstabdiff_mle_lr_hdlss} shows that BSTabDiff yields the strongest synthetic-data utility under an MLE-style evaluation with Logistic Regression across all 8 HDLSS datasets, achieving the best mean accuracy on all 8 datasets among synthetic baselines. In particular, BSTabDiff consistently improves over the next-best synthetic competitors (typically SMOTE or TabDiff), while also closing much of the gap to the real-data upper bound (TRTR) on several datasets. For example, it nearly matches TRTR on LNG (95.96\% vs 96.54\%), SMK (72.08\% vs 72.34\%), and ARC (86.50\% vs 86.70\%). These results indicate that BSTabDiff preserves decision-relevant structure needed by a simple linear classifier, rather than only matching marginal statistics. Complementing this, Table~\ref{tab:colon_downstream_classifiers} evaluates a stronger downstream suite on Colon and finds that models trained on BSTabDiff synthetic data remain competitive with (and in some cases slightly 
improve upon) training on real data (TRTR) across diverse learners (TANDEM~\citep{tandem}, TabPFN-2.5~\citep{tabpfn2.5}, and CatBoost~\citep{catboost}), with comparable AUC and robust accuracy. Together, the cross-dataset LR benchmark (Table~\ref{tab:bstabdiff_mle_lr_hdlss}) and the multi-classifier Colon study (Table~\ref{tab:colon_downstream_classifiers}) support that BSTabDiff produces high-utility synthetic samples that transfer beyond a single evaluator and can effectively substitute for real data in the HDLSS regime.\newline
\begin{table*}[t]
\caption{MLE-based synthesis comparison using Logistic Regression. Accuracy is reported as mean$_{\pm \,\mathrm{std}}$ over evaluation folds on 8 HDLSS datasets.
\textbf{Bold} indicates the best synthetic baseline; \underline{underline} indicates the second best. 
Real (TRTR) is shown separately as an upper-bound reference.}
\label{tab:bstabdiff_mle_lr_hdlss}
\centering
\scriptsize
\setlength{\tabcolsep}{4.5pt}
\renewcommand{\arraystretch}{1.15}
\begin{tabular}{|l|c|c|c|c|c|c|c|c|}
\hline
\textbf{Model} & \textbf{COL} & \textbf{LNG} & \textbf{GLI} & \textbf{SMK} & \textbf{AML} & \textbf{PRS} & \textbf{ARC} & \textbf{TOX} \\
\hline
CTAB-GAN
& 62.32$_{\pm 5.56}$ & 69.36$_{\pm 3.56}$ & 60.30$_{\pm 6.68}$ & 55.32$_{\pm 6.68}$ & 69.48$_{\pm 6.24}$ & 71.64$_{\pm 5.47}$ & 64.32$_{\pm 6.86}$ & 62.14$_{\pm 6.45}$ \\
CTAB-GAN+
& 63.38$_{\pm 7.32}$ & 70.72$_{\pm 4.86}$ & 58.74$_{\pm 5.56}$ & 56.46$_{\pm 5.56}$ & 71.65$_{\pm 5.46}$ & 70.86$_{\pm 6.36}$ & 66.46$_{\pm 4.56}$ & 63.15$_{\pm 5.46}$ \\
ForestDiff
& 74.65$_{\pm 5.34}$ & 76.17$_{\pm 5.79}$ & 72.34$_{\pm 4.56}$ & 62.14$_{\pm 4.48}$ & 80.81$_{\pm 4.47}$ & 76.12$_{\pm 5.86}$ & 70.15$_{\pm 5.22}$ & 76.12$_{\pm 6.68}$ \\
BinaryDiff
& 72.16$_{\pm 4.65}$ & 73.76$_{\pm 4.62}$ & 70.71$_{\pm 5.68}$ & 61.13$_{\pm 6.13}$ & 78.67$_{\pm 4.49}$ & 73.71$_{\pm 6.73}$ & 69.12$_{\pm 5.74}$ & 71.69$_{\pm 4.45}$ \\
TabDiff
& \underline{79.72$_{\pm 8.72}$} & \underline{88.64$_{\pm 2.32}$} & \underline{76.48$_{\pm 5.24}$} & 68.32$_{\pm 4.36}$ & 92.68$_{\pm 2.42}$ & 86.42$_{\pm 6.36}$ & 76.72$_{\pm 4.56}$ & \underline{86.32$_{\pm 4.84}$} \\
SMOTE
& 78.71$_{\pm 9.21}$ & 87.03$_{\pm 2.79}$ & 76.47$_{\pm 4.63}$ & \underline{68.44$_{\pm 3.32}$} & \underline{93.22$_{\pm 2.91}$} & \underline{87.16$_{\pm 5.56}$} & \underline{79.08$_{\pm 3.12}$} & 84.32$_{\pm 4.98}$ \\
TVAE
& 79.03$_{\pm 6.35}$ & 82.27$_{\pm 3.15}$ & 79.23$_{\pm 4.81}$ & 66.76$_{\pm 4.19}$ & 92.67$_{\pm 3.43}$ & 83.45$_{\pm 7.81}$ & 78.12$_{\pm 6.76}$ & 81.54$_{\pm 5.23}$ \\
CTGAN
& 61.29$_{\pm 7.56}$ & 68.47$_{\pm 3.89}$ & 30.59$_{\pm 7.26}$ & 54.55$_{\pm 5.98}$ & 68.89$_{\pm 6.21}$ & 59.80$_{\pm 8.76}$ & 60.50$_{\pm 5.34}$ & 56.99$_{\pm 5.07}$ \\
TabDDPM
& 69.35$_{\pm 8.91}$ & 68.47$_{\pm 2.89}$ & 65.88$_{\pm 3.97}$ & 54.01$_{\pm 4.76}$ & 73.61$_{\pm 6.58}$ & 69.17$_{\pm 8.27}$ & 65.12$_{\pm 5.76}$ & 59.77$_{\pm 5.19}$ \\
BSTabDiff
& \textbf{83.26}$_{\pm 12.40}$ & \textbf{95.96}$_{\pm 2.85}$ & \textbf{82.35}$_{\pm 8.96}$ & \textbf{72.08}$_{\pm 5.58}$ & \textbf{95.30}$_{\pm 7.45}$ & \textbf{90.76}$_{\pm 9.25}$ & \textbf{86.50}$_{\pm 6.24}$ & \textbf{87.70}$_{\pm 6.20}$ \\
\hline\hline
Real (TRTR)
& 86.13$_{\pm 8.56}$ & 96.54$_{\pm 2.43}$ & 89.88$_{\pm 6.96}$ & 72.34$_{\pm 7.81}$ & 98.04$_{\pm 3.15}$ & 91.76$_{\pm 5.34}$ & 86.70$_{\pm 6.19}$ & 94.38$_{\pm 4.22}$ \\
\hline
\end{tabular}
\end{table*}
\begin{table}[t]
\caption{Downstream performance comparison on COL using synthetic data from BSTabDiff vs real data (TRTR). Metrics are reported as mean$_{\pm\,\mathrm{std}}$ over evaluation folds for three classifiers. Here, $+$ = TANDEM, $*$ = TabPFN-2.5, $\diamond$ = CatBoost.}
\label{tab:colon_downstream_classifiers}
\centering
\small
\setlength{\tabcolsep}{5pt}
\renewcommand{\arraystretch}{1.15}
\begin{tabular}{|l|c|c||c|c||c|c|}
\hline
\textbf{Data} & \textbf{$+$Acc} & \textbf{$+$AUC} & \textbf{$*$Acc} & \textbf{$*$AUC} & \textbf{$\diamond$Acc} & \textbf{$\diamond$AUC} \\
\hline
BSTabDiff   & 81.97$_{\pm 12.07}$ & 87.85$_{\pm 9.33}$ & 86.74$_{\pm 10.10}$ & 89.25$_{\pm 7.82}$ & 85.46$_{\pm 10.81}$ & 89.22$_{\pm 9.76}$ \\
\hline\hline
Real (TRTR) & 79.44$_{\pm 11.10}$ & 87.02$_{\pm 9.25}$ & 86.18$_{\pm 8.68}$  & 90.53$_{\pm 8.64}$ & 82.28$_{\pm 7.86}$  & 87.20$_{\pm 9.68}$ \\
\hline
\end{tabular}
\end{table}
\textbf{Fidelity Diagnostics.}
BSTabDiff’s fidelity diagnostics in Table~\ref{tab:fidelity-bstabdiff} in Appendix~\ref{fid} show that synthetic samples generally match real marginals and correlation structure without trivial memorization. Per-feature Kolmogorov-Smirnov (KS) / Wasserstein (W1) distances are modest on COL and AML but larger on LNG and especially GLI, indicating increased difficulty with skewed, high-variance features. Pearson/Spearman $|\Delta\mathrm{corr}|$ stay moderate (mean $\approx 0.17$-0.22), suggesting substantial preservation of the correlation graph; higher-order moment gaps are small on COL/AML/LNG but larger on GLI. Label-conditional structure ($|\Delta \mathrm{MI}(\text{feature},y)|$) is reasonably captured on COL/AML and is weaker on LNG. NaN rates remain negligible, nearest-neighbor privacy distances indicate limited sample copying, and Classifier Two-Sample Test (C2ST)~\citep{c2st} is near chance for COL/AML/GLI but much higher on LNG, the hardest case.\newline
\textbf{Ablation studies.}
Ablation results on Colon (TabPFN-2.5) show that BSTabDiff is broadly robust to reasonable hyperparameter changes, with TSTR AUC remaining in a tight range across compact variants (Fig.~\ref{fig:bstabdiff_ablation_4rows}, Table~\ref{tab:bstabdiff_compact_ablations} in Appendix~\ref{abla}). Disabling class-conditional marginals causes the clearest degradation in downstream utility, while switching to a flow prior and increasing the synthetic sample budget can slightly improve TSTR, with $n_{\text{syn}}{=}500$ achieving the best mean TSTR AUC. AUC efficiency (TSTR/TRTR) stays close to 1 across settings, indicating that synthetic-utility gains reflect real-data performance rather than overfitting, and the scaling curves suggest only mild sensitivity to prior training epochs and the number of blocks $M$ within the tested range.
\section{Conclusion}
\label{sec:conclusion}
We introduced BSTabDiff, a block-subunit generative framework for HDLSS tabular synthesis that concentrates global dependence learning in a compact block-latent space ($M\ll m$) while decoding to high-dimensional observations via copula-driven dependence, flexible per-feature marginals, and explicit missingness mechanisms. By aligning the model’s effective degrees of freedom with the latent block dimension rather than the ambient feature dimension, BSTabDiff provides a stable and scalable route to high-dimensional generation when $n\ll m$. Empirically, BSTabDiff consistently yields high-utility synthetic data across diverse HDLSS datasets, improving over strong GAN/VAE/diffusion baselines and in several cases approaching real-data performance under both classical and modern downstream classifiers. Fidelity and privacy-oriented diagnostics further suggest that the generated samples capture key marginal and dependence structure without collapsing to trivial memorization, while also showing that some multi-class datasets are harder to match closely with the real data. Overall, BSTabDiff demonstrates that block-latent priors coupled with structured emissions can serve as a practical engine for controllable HDLSS benchmark generation, data augmentation, and synthetic pretraining for tabular learning systems.
\newpage
\section*{Acknowledgments}
This work was supported in part by grants from the US National Science Foundation (Award \#1920920, \#2125872, and \#2223793).
\bibliography{iclr2026_delta}
\bibliographystyle{iclr2026_delta}
\newpage
\appendix
\section{Appendix}
This supplementary document supports our main paper \textit{BSTabDiff: Block-Subunit Diffusion Priors for High-Dimensional Tabular Data Generation} (Submitted to the ICLR 2026 2\textsuperscript{nd} Workshop on Deep Generative Models in Machine Learning:
Theory, Principle and Efficacy - DeLTa). Specifically, it includes:
\begin{itemize}
    \item Detailed Related Work in Sec.~\ref{morerelated}
    \item Pseudocode in Sec.~\ref{pseudocode}
    \item Additional Fidelity Diagnostics in Sec.~\ref{fid}
    \item Additional Ablation Studies in Sec.~\ref{abla}
    \item BSTabDiff Hyperparameters in Sec.~\ref{hyp}
\end{itemize}

\renewcommand{\thesection}{A1}
\renewcommand{\thesubsection}{A1.\arabic{subsection}}
\setcounter{figure}{0}\renewcommand{\thefigure}{A1.\arabic{figure}}
\setcounter{table}{0}\renewcommand{\thetable}{A1.\arabic{table}}
\section{Detailed Related Work}
\label{morerelated}
Prior work related to tabular generation includes both generative synthesis methods and feature augmentation approaches (e.g., AquaAugmentor~\citep{aqua}). Here, we focus on generative models, since BSTabDiff is a generative framework for HDLSS tabular synthesis.

\paragraph{GAN/VAE-based tabular generators.}
Tabular synthesis has evolved from classical oversampling strategies such as SMOTE~\citep{smote} to deep generative models designed for mixed-type tables with complex dependencies. Early representative methods include CTGAN~\citep{w16} (and its VAE counterpart, TVAE~\citep{w16}), which introduced conditional training strategies to better handle skewed categorical variables; CTAB-GAN~\citep{ctabgan}, which improved mixed-type modeling and missing-value handling via redesigned encodings and conditional vectors; and CTAB-GAN+~\citep{ctabgan+}, which incorporated differential privacy mechanisms (e.g., DP-SGD) to strengthen privacy-utility trade-offs. Privacy-centric and domain-specific variants such as PATE-GAN~\citep{pate} and medGAN~\citep{med} further highlight the importance of controlled disclosure and domain constraints in synthetic data generation.

\paragraph{Diffusion and score-based tabular generators.}
More recently, diffusion/score-based methods have emerged as strong general-purpose approaches for tabular synthesis. TabDDPM~\citep{w11} established diffusion as a competitive baseline for tabular generation, motivating subsequent refinements for mixed-type data and improved dependency modeling, including STaSy~\citep{stasy}, CoDi~\citep{codi}, and TabDiff~\citep{tabdiff}. Latent-space diffusion has also been explored to better manage heterogeneous feature types and reduce modeling difficulty, as in TabSyn~\citep{w13}. Complementary hybrid directions combine diffusion/flow ideas with tree-based learners, such as ForestDiffusion~\citep{fdiff}, while discretization-first strategies such as Binary Diffusion~\citep{bfiff} provide alternative pipelines for handling tabular variables.

\paragraph{LLM/foundation-model and structure-aware generators.}
Beyond diffusion, structure-aware models explicitly represent column relationships, exemplified by GOGGLE~\citep{goggle}, which learns relational structure among features for improved synthesis. In parallel, LLM-based tabular synthesis treats rows as sequences and leverages pretrained language models: GReaT~\citep{great} adapts autoregressive LMs with flexible conditioning, HARMONIC~\citep{harmonic} emphasizes joint utility and privacy evaluation for LLM-based synthesizers, and EPIC~\citep{epic} proposed  effective prompting for imbalanced-class tabular synthesis. Finally, CTSyn~\citep{ctsyn} targets cross-table generalization using schema-conditioned latent diffusion over a shared representation space.

\textbf{Positioning.} Unlike prior tabular generators that largely treat all features in a flat manner, operate directly in the ambient feature space, BSTabDiff is designed specifically for the HDLSS regime, where $n \ll m$ and feature dependencies are often structured into local correlation groups. Our key distinction is to introduce a block-subunit generative view that compresses global dependence learning into a low-dimensional block-latent space while retaining flexible feature-wise decoding through copula-based emissions, non-Gaussian marginals, and explicit missingness modeling. In this sense, BSTabDiff is not simply another diffusion-based tabular generator; rather, it contributes a structure-aware HDLSS generative framework that can use diffusion or flow priors within a block-factorized design, improving stability, and controllability for high-dimensional tabular synthesis.

\renewcommand{\thesection}{A2}
\renewcommand{\thesubsection}{A2.\arabic{subsection}}
\setcounter{figure}{0}\renewcommand{\thefigure}{A2.\arabic{figure}}
\setcounter{table}{0}\renewcommand{\thetable}{A2.\arabic{table}}
\setcounter{algorithm}{0}\renewcommand{\thealgorithm}{A2.\arabic{algorithm}}
\section{Pseudocode}
\label{pseudocode}
Algorithm~\ref{alg:block_subunit_train} and Algorithm~\ref{alg:block_subunit_gen} describe the two complementary phases of BSTabDiff. Algorithm~\ref{alg:block_subunit_train} is the learning procedure: it infers low-dimensional block latents $h$ from real samples, fits the block-factorized emission model that maps these latents to observed features and missingness patterns, and learns a compact prior on $h$ using either a flow or diffusion objective. In contrast, Algorithm~\ref{alg:block_subunit_gen} is the sampling procedure used after training: it draws a label (optionally), samples block latents from the learned prior, generates missingness and block-wise feature values through the shared subunit variables, and outputs a synthetic sample in the observed feature space. Thus, the training algorithm estimates the model parameters from data, whereas the generation algorithm uses the fitted model to synthesize new HDLSS tabular samples.

Algorithm~\ref{alg:block_subunit_train} fits BSTabDiff by (i) inferring low-dimensional block latents $h\in\mathbb{R}^M$ from each training example via $q_\psi(h\mid X,R,Y)$, (ii) learning the block-factorized emission model $p_\phi(X,R\mid h,Y)$, and (iii) learning a compact prior $p_\theta(h\mid Y)$ on the block latents (either by conditional MLE for flows or score-matching for diffusion). After training, Algorithm~\ref{alg:block_subunit_gen} generates a synthetic sample by optionally sampling a label $Y$, drawing block latents $h\sim p_\theta(h\mid Y)$, sampling a missingness mask $R$, and decoding each feature within its block using the shared subunit $h_t$ (via copula-Gaussian dependence plus inverse marginal for continuous features, or logits for categorical features), with an optional final permutation to match arbitrary observed feature order.
\begin{algorithm}[htbp]
\caption{Block-Subunit HDLSS Tabular Generation}
\label{alg:block_subunit_gen}
\begin{algorithmic}[1]
\Require Block partition $\{\mathcal{S}_t\}_{t=1}^M$ in canonical index space, label prior $p(Y)$ (optional),
block-latent prior $p_\theta(h\mid Y)$, missingness model $\rho_{\theta,j}(h,Y)$,
continuous marginals $\{F_{j,Y}\}$, categorical logits model $\{\ell_{j}(\cdot)\}$ (if mixed types),
and (optional) permutation distribution $p(\pi)$
\Ensure Synthetic sample $(X,R,Y)$ with $n \ll m$ regime parameters and realistic dependence/marginals

\State Sample label $Y \sim p(Y)$ \Comment{optional; skip for unconditional generation}

\State Sample block latents $h=(h_1,\dots,h_M) \sim p_\theta(h \mid Y)$
\Comment{e.g., diffusion/flow/graphical/mixture prior on $\mathbb{R}^M$}

\State Initialize canonical vectors $\tilde X \leftarrow \texttt{NA}\in(\mathbb{R}\cup\{\texttt{NA}\})^m$ and $R\leftarrow \mathbf{0}\in\{0,1\}^m$

\For{$t=1$ to $M$}
  \For{each feature $j \in \mathcal{S}_t$}
    \State Sample missingness $R_j \sim \mathrm{Bernoulli}\!\big(\rho_{\theta,j}(h,Y)\big)$
    \If{$R_j = 0$}
      \State $\tilde X_j \leftarrow \texttt{NA}$
    \Else
      \If{$j$ is continuous}
        \State Sample copula-Gaussian latent
        $Z_j \leftarrow a_j h_t + b_j(Y) + \xi_j$,
        \quad $\xi_j \sim \mathcal{N}\!\big(0,\sigma_j^2(h_t,Y)\big)$
        \State Map to uniform $U_j \leftarrow \Phi(Z_j)$
        \State Apply inverse marginal $\tilde X_j \leftarrow F^{-1}_{j,Y}(U_j)$
      \Else \Comment{categorical / discrete}
        \State Compute logits $\ell_j \gets W_j h_t + c_j(Y)$
        \State Sample $\tilde X_j \sim \mathrm{Categorical}\big(\mathrm{softmax}(\ell_j)\big)$
      \EndIf
    \EndIf
  \EndFor
\EndFor

\State Sample (or fix) permutation $\pi \sim p(\pi)$ \Comment{or set $\pi$ to identity}

\State Output observed vectors $X \leftarrow \tilde X_{\pi}$ and $R \leftarrow R_{\pi}$

\State \Return $(X,R,Y)$
\end{algorithmic}
\end{algorithm}
\begin{algorithm}[htbp]
\caption{Training BSTabDiff HDLSS Generator (diffusion or flow prior)}
\label{alg:block_subunit_train}
\begin{algorithmic}[1]
\Require Dataset $\mathcal{D}=\{(X^{(i)},R^{(i)},Y^{(i)})\}_{i=1}^n$ (labels optional),
prior family $p_\theta(h\mid Y)$ (diffusion or flow),
emission parameters $\phi$ (Eq.~\ref{eq:block_factorization}),
inference model $q_\psi(h\mid X,R,Y)$
\Ensure Trained parameters $(\theta,\phi,\psi)$

\For{each minibatch $\{(X,R,Y)\}$ from $\mathcal{D}$}
  \State Sample $h \sim q_\psi(h\mid X,R,Y)$ \Comment{amortized inference; or optimize $h$ per sample}
  \State Update emissions by maximizing $\log p_\phi(X,R\mid h,Y)$ using Eq.~\ref{eq:block_factorization}

  \If{flow prior on $h$}
    \State Update $\theta$ by maximizing $\log p_\theta(h\mid Y)$ \Comment{conditional MLE}
  \Else \Comment{diffusion prior on $h$}
    \State Sample timestep $t$ and noise $\epsilon$
    \State Form noisy latent $h_t$ via the forward process $q(h_t\mid h)$
    \State Update $\theta$ to minimize $\|\epsilon - \epsilon_\theta(h_t,t,Y)\|_2^2$ \Comment{score matching}
  \EndIf
\EndFor

\State \Return $(\theta,\phi,\psi)$
\end{algorithmic}
\end{algorithm}
\renewcommand{\thesection}{A3}
\renewcommand{\thesubsection}{A3.\arabic{subsection}}
\setcounter{figure}{0}\renewcommand{\thefigure}{A3.\arabic{figure}}
\setcounter{table}{0}\renewcommand{\thetable}{A3.\arabic{table}}
\section{Additional Fidelity Diagnostics}
\label{fid}
BSTabDiff’s fidelity diagnostics in Table~\ref{tab:fidelity-bstabdiff} show that, on most HDLSS datasets, the generator produces synthetic distributions that are reasonably close to the real data while still avoiding trivial memorization. We first probe marginal fidelity via per-feature KS and W1 distances, which are modest on Colon and ALLAML and somewhat larger on Lung and (especially in scale) GLI-85, indicating that BSTabDiff captures many univariate marginals but struggles more on heavily skewed, high-variance features. Pairwise structure is assessed through Pearson and Spearman $|\Delta\mathrm{corr}|$, which remain in a moderate regime (mean $\approx 0.17$–0.22) across datasets, suggesting that the generator preserves a substantial fraction of the real correlation graph rather than collapsing to independent noise. Higher-order moments (HOM: mean, variance, skewness, kurtosis) further reveal that Colon, ALLAML, and Lung have relatively small average discrepancies, whereas GLI-85 exhibits inflated variance- and mean-scale differences, consistent with its extreme HDLSS regime and raw feature scaling. Label-conditional structure is evaluated via $|\Delta \mathrm{MI}(\text{feature}, y)|$, where Colon and ALLAML show moderate MI gaps, and Lung has a larger mean MI discrepancy, implying that, for Lung, synthetic samples only coarsely approximate the most discriminative features. We also monitor NaN rates, which are essentially zero in the real data and remain at $\mathcal{O}(10^{-4})$ in the synthetic tables, and a nearest-neighbor privacy diagnostic (Priv.), where large NN distances between synthetic and real points indicate that the generator does not simply copy training samples. Finally, the C2ST accuracy/AUC quantifies how easily a classifier can distinguish real from synthetic data: for Colon, ALLAML, and GLI-85, C2ST performance is at or even below chance, indicating that simple discriminators do not find a stable separating signal across folds, whereas Lung exhibits a much stronger C2ST signal, highlighting it as the most challenging dataset where BSTabDiff leaves a clearer “synthetic” footprint despite still providing useful downstream TSTR performance.
\begin{table}[t]
\centering
\scriptsize
\caption{Fidelity diagnostics for BSTabDiff-generated synthetic data on four HDLSS datasets. 
For each dataset, we report per-feature marginal distances (KS, Wasserstein-1), pairwise correlation discrepancies (Pearson/Spearman), higher-order moment discrepancies, label-feature mutual information discrepancies, NaN rates, privacy via nearest-neighbor distance from synthetic to real samples, and C2ST performance (mean~$\pm$~std over 5$\times$5 CV). Here, HOM = Higher-order moments (mean $|\Delta|$ over features), Priv. = Privacy (NN dist: synthetic $\to$ nearest real), W1 = Wasserstein-1 distance (per feature).}
\label{tab:fidelity-bstabdiff}
\resizebox{\linewidth}{!}{%
\begin{tabular}{|l|l|c|c|c|c|}
\hline
\textbf{Metric} & \textbf{Statistic} & \textbf{COL} & \textbf{AML} & \textbf{LNG} & \textbf{GLI} \\
\hline
KS (per feature) & mean & 0.0848 & 0.1047 & 0.2001 & 0.1592 \\
\cline{2-6}
                 & max  & 0.2387 & 0.2917 & 0.6650 & 0.4471 \\
\hline
W1 & mean & 0.1252 & 0.1563 & 0.1036 & 3.76$\times 10^{2}$ \\
\cline{2-6}
   & max  & 0.3910 & 0.8026 & 0.6745 & 1.63$\times 10^{4}$ \\
\hline
Pearson $|\Delta\mathrm{corr}|$ & mean & 0.2183 & 0.1874 & 0.2072 & 0.1695 \\
\cline{2-6}
                                & max  & 1.1046 & 1.2400 & 0.9875 & 1.3613 \\
\hline
Spearman $|\Delta\mathrm{corr}|$ & mean & 0.2179 & 0.1893 & 0.2110 & 0.1705 \\
\cline{2-6}
                                 & max  & 1.0889 & 1.1925 & 1.0696 & 1.2419 \\
\hline
HOM 
& mean   & 7.99$\times 10^{-2}$ & 8.94$\times 10^{-2}$ & 7.88$\times 10^{-2}$ & 3.19$\times 10^{2}$ \\
\cline{2-6}
& var    & 1.55$\times 10^{-1}$ & 2.53$\times 10^{-1}$ & 5.08$\times 10^{-2}$ & 2.40$\times 10^{6}$ \\
\cline{2-6}
& skew   & 0.1666 & 0.4664 & 0.4580 & 0.5397 \\
\cline{2-6}
& kurt   & 0.4819 & 2.3330 & 2.2799 & 2.8462 \\
\hline
$|\Delta \mathrm{MI}(\text{feature}, y)|$ 
& mean & 0.2218 & 0.1656 & 0.6274 & 0.1661 \\
\cline{2-6}
& max  & 0.5115 & 0.5435 & 1.1457 & 0.5458 \\
\hline
NaN fraction & real      
& 0.0000 & 0.0000 & 0.0000 & 0.0000 \\
\cline{2-6}
            & synthetic  
& 1.10$\times 10^{-4}$ & 1.03$\times 10^{-4}$ & 1.13$\times 10^{-4}$ & 1.02$\times 10^{-4}$ \\
\hline
Priv. 
& mean & 4.97$\times 10^{1}$ & 9.21$\times 10^{1}$ & 1.53$\times 10^{1}$ & 4.67$\times 10^{5}$ \\
\cline{2-6}
& min  & 4.68$\times 10^{1}$ & 8.42$\times 10^{1}$ & 1.20$\times 10^{1}$ & 4.27$\times 10^{5}$ \\
\cline{2-6}
& p1   & 4.72$\times 10^{1}$ & 8.43$\times 10^{1}$ & 1.21$\times 10^{1}$ & 4.28$\times 10^{5}$ \\
\cline{2-6}
& p5   & 4.77$\times 10^{1}$ & 8.54$\times 10^{1}$ & 1.23$\times 10^{1}$ & 4.36$\times 10^{5}$ \\
\hline
C2ST ACC & mean $\pm$ std 
& 0.2880 $\pm$ 0.1023 
& 0.2600 $\pm$ 0.1149 
& 0.7401 $\pm$ 0.0437 
& 0.2286 $\pm$ 0.1059 \\
\hline
C2ST AUC & mean $\pm$ std 
& 0.1614 $\pm$ 0.0795 
& 0.1536 $\pm$ 0.1285 
& 0.7029 $\pm$ 0.0610 
& 0.1312 $\pm$ 0.0954 \\
\hline
\end{tabular}%
}
\end{table}
\renewcommand{\thesection}{A4}
\renewcommand{\thesubsection}{A4.\arabic{subsection}}
\setcounter{figure}{0}\renewcommand{\thefigure}{A4.\arabic{figure}}
\setcounter{table}{0}\renewcommand{\thetable}{A4.\arabic{table}}
\section{Additional Ablation Studies}
\label{abla}
Table~\ref{tab:bstabdiff_compact_ablations} reports compact ablations of BSTabDiff on Colon evaluated with TabPFN-2.5. Overall, BSTabDiff remains robust: most variants achieve high downstream utility (TSTR AUC $\approx$ 0.87-0.90) and consistently track the real-data upper bound (TRTR AUC $\approx$ 0.93), yielding strong efficiency ratios (TSTR/TRTR $\approx$ 0.94-0.96). The best TSTR is obtained with more synthetic samples (\texttt{nSyn500}), while changing the latent prior (FlowPrior), removing EMA, or varying the number of blocks ($M=16/64$) has only minor impact on utility. In contrast, the discriminator signal (C2ST AUC) is more sensitive to these choices, suggesting that some settings produce synthetic data that is easier to distinguish even when predictive utility remains comparable.
\begin{table}[t]
\caption{Key results for BSTabDiff compact ablations on Colon using TabPFN-2.5. Values are the mean $\pm$ standard deviation over the evaluation folds. Efficiency is defined as the AUC ratio of TSTR/TRTR. Lower C2ST AUC indicates that real and synthetic data are harder to distinguish in this experiment.}
\label{tab:bstabdiff_compact_ablations}
\centering
\small
\setlength{\tabcolsep}{6pt}
\renewcommand{\arraystretch}{1.15}
\begin{tabular}{|l|c|c|c|c|}
\hline
\textbf{Setting} & \textbf{TSTR AUC $\uparrow$} & \textbf{TRTR AUC $\uparrow$} & \textbf{C2ST AUC $\downarrow$} & \textbf{Eff. (TSTR/TRTR) $\uparrow$}\\
\hline
BASE      & 0.886 $\pm$ 0.075 & 0.932 $\pm$ 0.072 & 0.101 $\pm$ 0.058 & 0.950\\ \hline
FlowPrior & 0.896 $\pm$ 0.078 & 0.931 $\pm$ 0.070 & 0.180 $\pm$ 0.100 & \textbf{0.963}\\ \hline
noEMA     & 0.886 $\pm$ 0.075 & 0.931 $\pm$ 0.072 & 0.192 $\pm$ 0.113 & 0.952\\ \hline
noCC-Marg & 0.872 $\pm$ 0.098 & 0.930 $\pm$ 0.070 & \textbf{0.070 $\pm$ 0.030} & 0.938\\ \hline
noCC-Miss & 0.889 $\pm$ 0.078 & 0.931 $\pm$ 0.070 & 0.190 $\pm$ 0.114 & 0.954\\ \hline
M16       & 0.886 $\pm$ 0.075 & 0.932 $\pm$ 0.072 & 0.188 $\pm$ 0.111 & 0.951\\ \hline
M64       & 0.896 $\pm$ 0.078 & 0.932 $\pm$ 0.072 & 0.205 $\pm$ 0.110 & 0.961\\ \hline
ep1500    & 0.896 $\pm$ 0.078 & 0.931 $\pm$ 0.070 & 0.236 $\pm$ 0.137 & 0.962\\ \hline
ep5000    & 0.893 $\pm$ 0.076 & 0.932 $\pm$ 0.072 & 0.170 $\pm$ 0.089 & 0.958\\ \hline
nSyn100   & 0.892 $\pm$ 0.078 & 0.931 $\pm$ 0.070 & 0.239 $\pm$ 0.117 & 0.958\\ \hline
nSyn500   & \textbf{0.901 $\pm$ 0.080} & \textbf{0.941 $\pm$ 0.075} & 0.223 $\pm$ 0.104 & 0.958\\
\hline
\end{tabular}
\end{table}
\renewcommand{\thesection}{A5}
\renewcommand{\thesubsection}{A5.\arabic{subsection}}
\setcounter{figure}{0}\renewcommand{\thefigure}{A5.\arabic{figure}}
\setcounter{table}{0}\renewcommand{\thetable}{A5.\arabic{table}}
\section{BSTabDiff Hyperparameters}
\label{hyp}
Table~\ref{tab:bstabdiff_hparams_all8_rot} summarizes the dataset-specific block latent size $M$ used across the eight HDLSS datasets. All other hyperparameters were kept fixed across datasets: we used a diffusion prior trained for 10{,}000 epochs with batch size 128, learning rate $10^{-3}$, EMA enabled with decay 0.999, no feature permutation, and no predefined blocks. Thus, the main dataset-specific adjustment was the block latent size $M$, which was increased for higher-dimensional datasets (e.g., GLI and SMK) and kept smaller for lower-dimensional ones (e.g., COL and TOX).
\begin{table}[htbp]
\caption{Dataset-specific block latent size $M$ used for BSTabDiff across the 8 HDLSS datasets.}
\label{tab:bstabdiff_hparams_all8_rot}
\centering
\small
\setlength{\tabcolsep}{8pt}
\renewcommand{\arraystretch}{1.15}
\begin{tabular}{|l|c|c|c|c|c|c|c|c|}
\hline
\textbf{Dataset} & COL & LNG & GLI & SMK & AML & PRS & ARC & TOX \\
\hline
\textbf{$M$}     & 32  & 64  & 128 & 192 & 64  & 64  & 64  & 32  \\
\hline
\end{tabular}
\vspace{2pt}
\end{table}
\renewcommand{\thesection}{A4}
\renewcommand{\thesubsection}{A4.\arabic{subsection}}
\setcounter{figure}{0}\renewcommand{\thefigure}{A4.\arabic{figure}}
\begin{figure*}[t]
  \centering
  \newcommand{\panelhA}{4.0cm}
  \newcommand{\panelhB}{4.0cm}
  \newcommand{\panelhC}{3.2cm}

  \begin{subfigure}[t]{\textwidth}
    \centering
    \includegraphics[height=\panelhA,keepaspectratio]{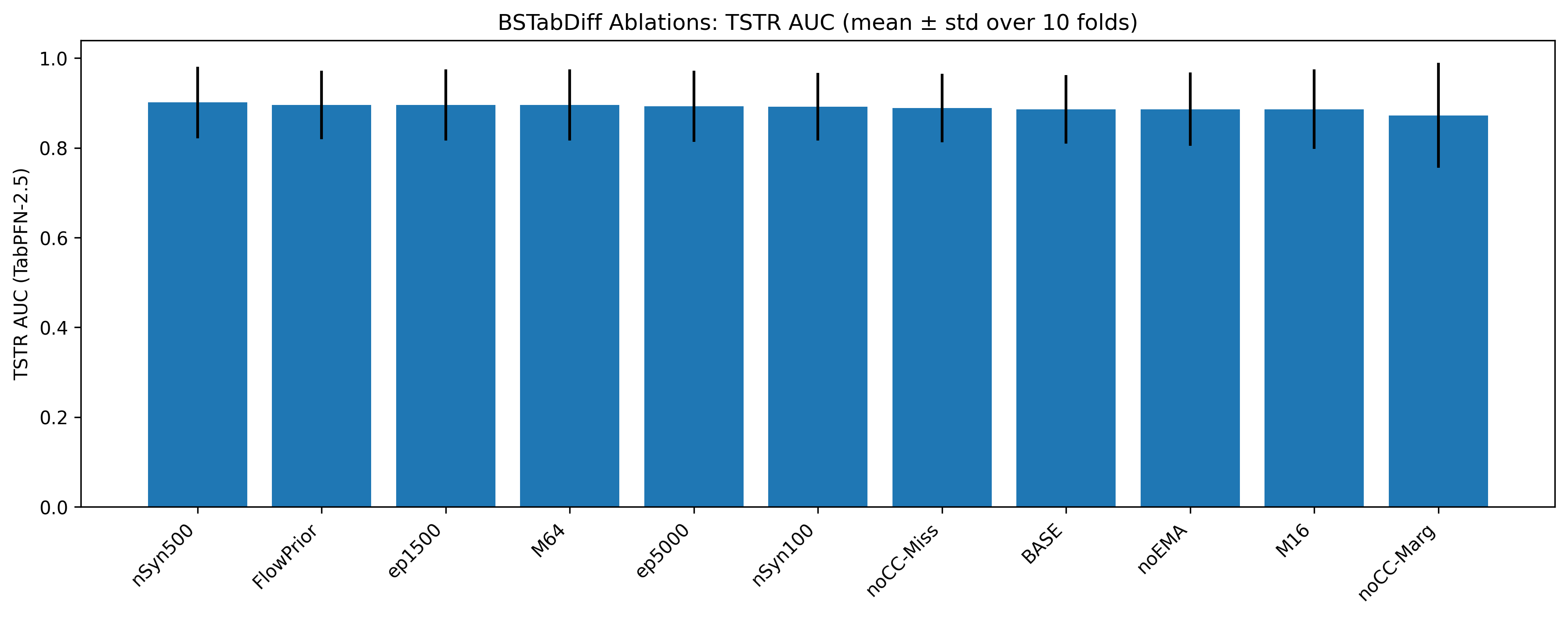}
    \caption{TSTR AUC (compact ablations).}
  \end{subfigure}

  \vspace{0.7em}

  \begin{subfigure}[t]{\textwidth}
    \centering
    \includegraphics[height=\panelhB,keepaspectratio]{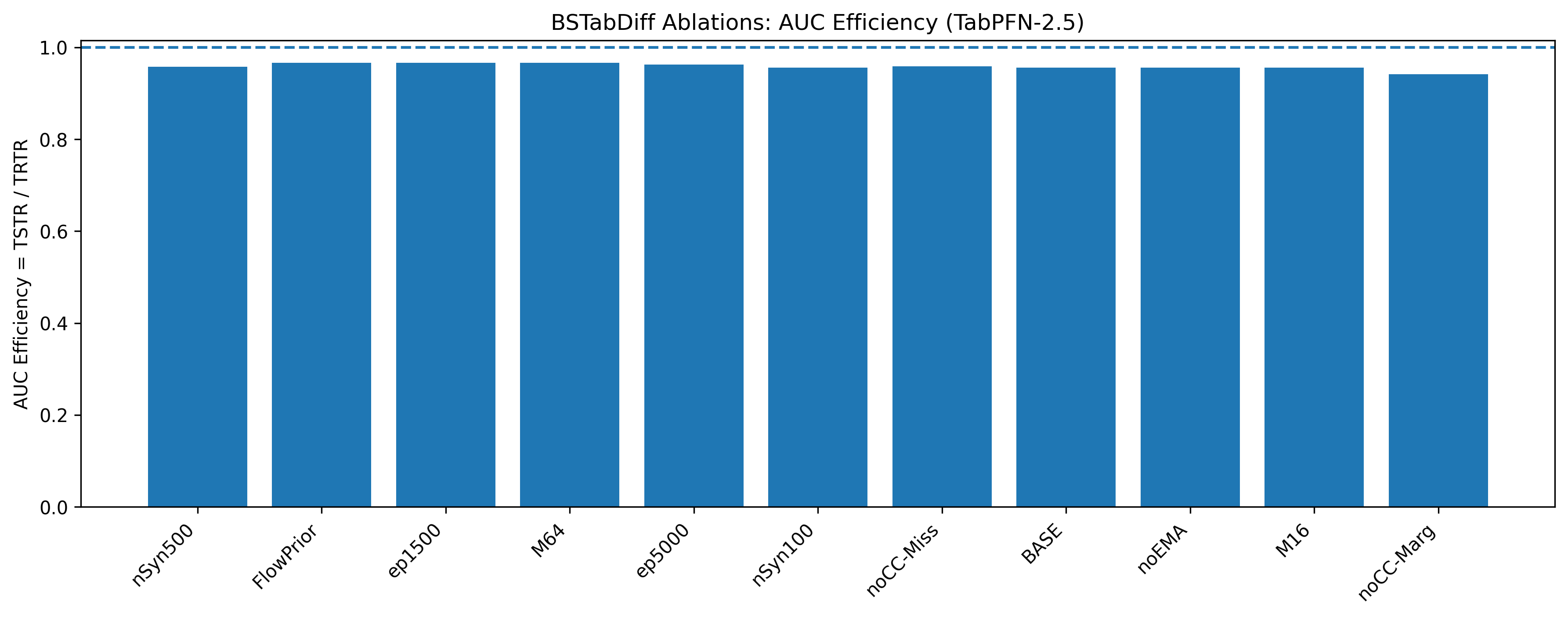}
    \caption{AUC efficiency (TSTR/TRTR).}
  \end{subfigure}

  \vspace{0.7em}

  \begin{subfigure}[t]{0.5\textwidth}
    \centering
    \includegraphics[height=\panelhC,keepaspectratio]{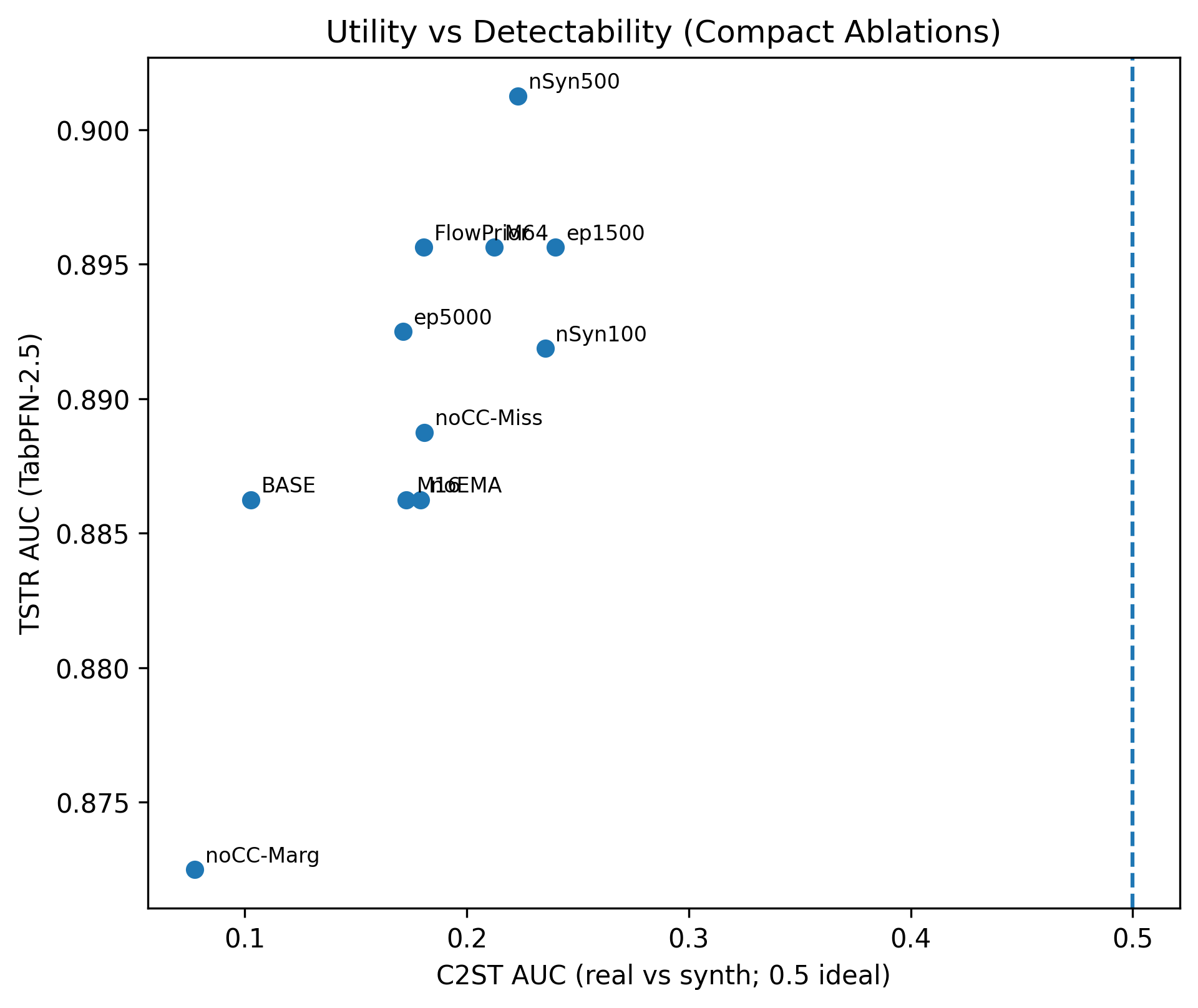}
    \caption{Utility vs detectability (TSTR vs C2ST).}
  \end{subfigure}\hfill
  \begin{subfigure}[t]{0.5\textwidth}
    \centering
    \includegraphics[height=\panelhC,keepaspectratio]{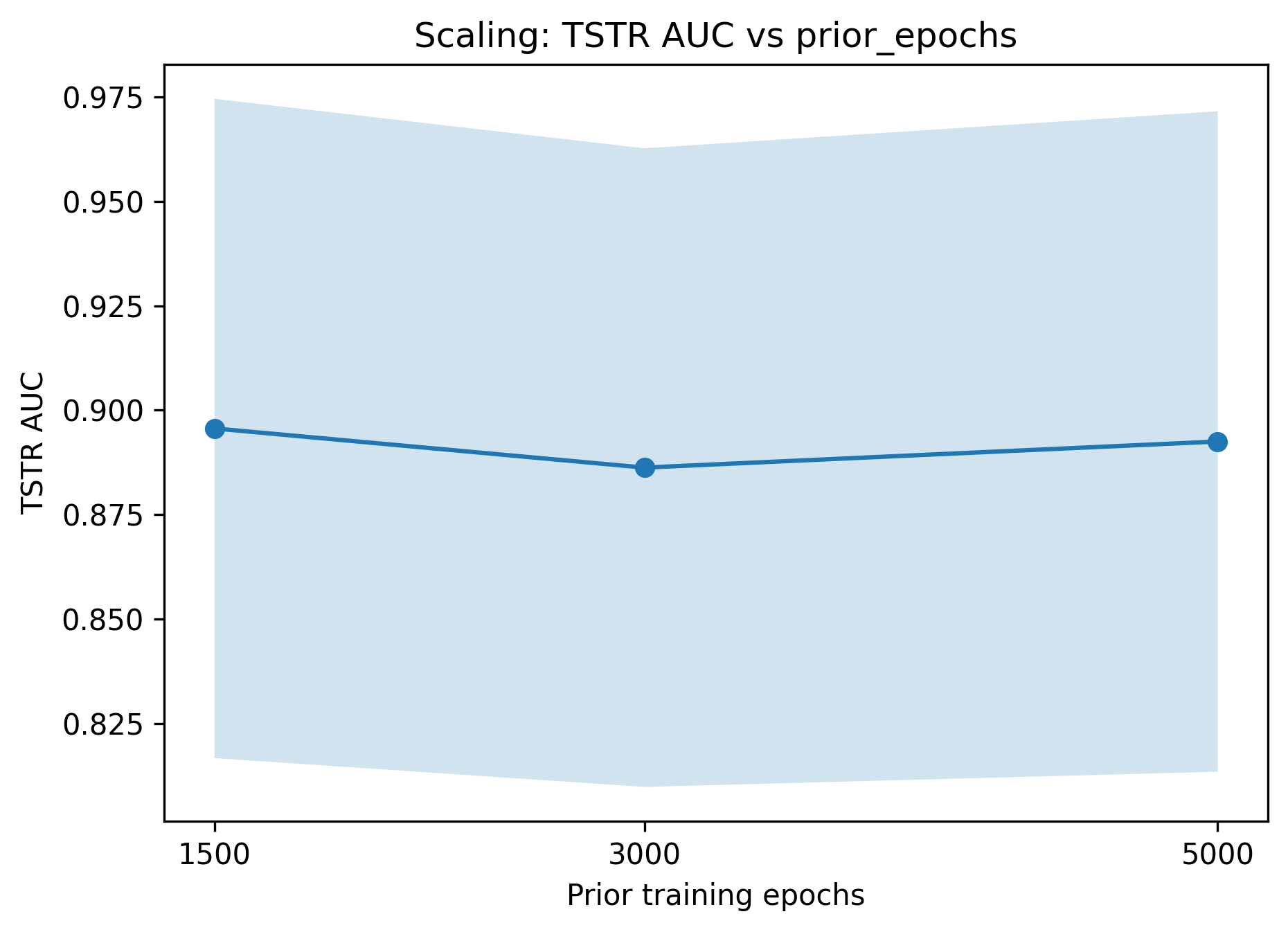}
    \caption{Scaling: TSTR AUC vs prior epochs.}
  \end{subfigure}

  \vspace{0.6em}

  \begin{subfigure}[t]{0.50\textwidth}
    \centering
    \includegraphics[height=\panelhC,keepaspectratio]{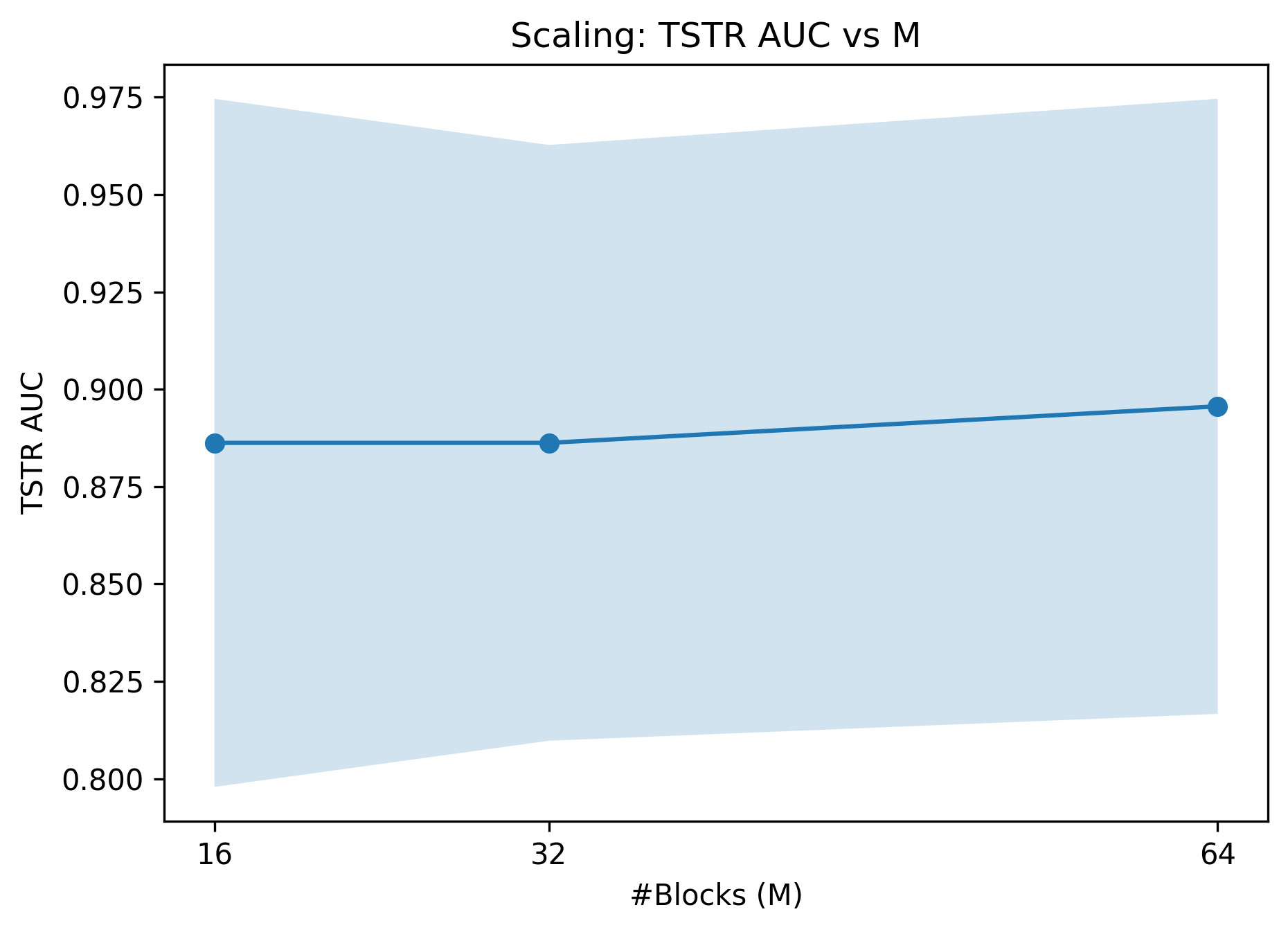}
    \caption{Scaling: TSTR AUC vs \#blocks $M$.}
  \end{subfigure}\hfill
  \begin{subfigure}[t]{0.50\textwidth}
    \centering
    \includegraphics[height=\panelhC,keepaspectratio]{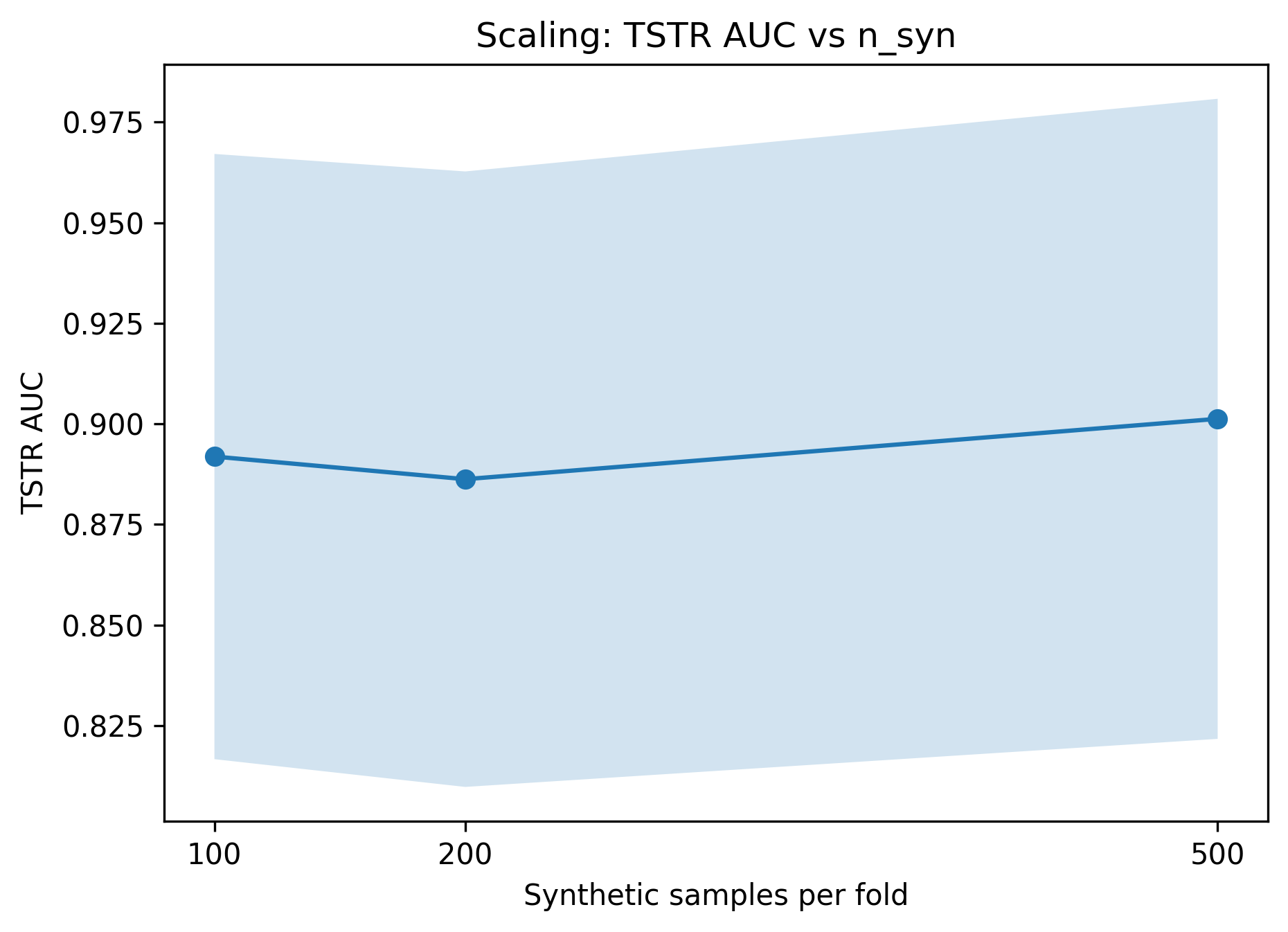}
    \caption{Scaling: TSTR AUC vs $n_{\text{syn}}$.}
  \end{subfigure}

  \caption{BSTabDiff compact ablations (TabPFN-2.5). The first two rows show the main compact-ablation summaries, while the last two rows show utility--privacy tradeoff and scaling trends.}
  \label{fig:bstabdiff_ablation_4rows}
\end{figure*}

\end{document}

%% file: math_commands.tex
\usepackage{amsmath,amsfonts,bm}

\def\eqref#1{equation~\ref{#1}}

\def\1{\bm{1}}

\DeclareMathAlphabet{\mathsfit}{\encodingdefault}{\sfdefault}{m}{sl}
\SetMathAlphabet{\mathsfit}{bold}{\encodingdefault}{\sfdefault}{bx}{n}

